\documentclass[11pt, logo, onecolumn, copyright, colorlinks=true, allcolors=blue]{nvidiatechreport}

\usepackage{microtype}
\usepackage{graphicx}
\usepackage{booktabs}
\usepackage{amsmath,amssymb}
\usepackage{mathtools}
\usepackage{hyperref}
\usepackage{url}
\usepackage{enumitem}
\usepackage{tabularx}
\usepackage{subcaption}
\usepackage{arydshln}
\usepackage{xcolor}
\usepackage{colortbl}
\usepackage{array}
\usepackage{makecell}

\usepackage{algorithm,algpseudocode}

\usepackage{graphicx} 
\usepackage{subcaption,ragged2e}
\usepackage[round]{natbib}

\usepackage[normalem]{ulem}
\usepackage[utf8]{inputenc} 
\usepackage[T1]{fontenc}    
\usepackage{url}
\usepackage{tikz}
\usepackage{enumitem}
\usetikzlibrary{positioning}
\usetikzlibrary{calc}
\usepackage{array}
\usepackage{booktabs}
\usepackage{wrapfig}
\usepackage{caption} 
\usepackage{listings}
\usepackage{adjustbox}
\usepackage{footnote}
\usepackage{multirow}
\usepackage{amsmath}
\usepackage{amsfonts}
\usepackage{breakcites}
\usepackage{blindtext}
\usepackage{svg}
\usepackage[most]{tcolorbox}
\usepackage{tabularx}
\usepackage{graphicx} 
\usepackage{xcolor}
\usepackage{colortbl}
\newcommand{\todo}[1]{\textcolor{red}{\textbf{[TODO}: #1]}}

\usepackage{amsthm}
\theoremstyle{plain}
\newtheorem{theorem}{Theorem}[section]

\newtheorem{proposition}[theorem]{Proposition}
\theoremstyle{definition}

\newcommand{\KL}{\mathrm{KL}}

\newcommand{\old}{\mathrm{old}}

\newcommand{\name}{\ifmmode\mathrm{PivotRL}\else PivotRL\fi}

\definecolor{b2}{RGB}{51,153,255}
\definecolor{myGreen}{RGB}{80,180,0}
\definecolor{myGold}{rgb}{0.75,0.6,0.12}
\definecolor{sweetgreen}{RGB}{198, 239, 161}

\title{PivotRL: High Accuracy Agentic Post-Training at Low Compute Cost}
\author{
Junkeun Yi$^{*, 1}$\quad
Damon Mosk-Aoyama$^{*, 1}$\quad
Baihe Huang$^{*, 2}$\quad
Ritu Gala$^{1}$\quad
Charles Wang$^{1}$\quad
Sugam Dipak Devare$^{1}$\quad
Khushi Bhardwaj$^{1}$\quad
Abhibha Gupta$^{1}$\quad
Oleksii Kuchaiev$^{1}$\quad
Jiantao Jiao$^{**,1,2}$\quad
Jian Zhang$^{**,1}$\quad
Venkat Srinivasan$^{**,\dagger,1}$
\vspace{6pt}

{\footnotesize\textnormal{$^{1}$NVIDIA\quad
$^{2}$UC Berkeley}}

{\footnotesize\textnormal{$^{*}$Equal Contribution\quad
$^{**}$Equal Advising\quad
$^{\dagger}$Corresponding Author: \texttt{venkats@nvidia.com}}}
}
\date{\todo{date}}

\begin{document}

\begin{abstract}
\large \textbf{Abstract.}
\normalsize
{Post-training for long-horizon agentic tasks has a tension between compute efficiency and generalization.}
While supervised fine-tuning (SFT) is compute efficient, it often suffers from out-of-domain (OOD) degradation. Conversely, end-to-end reinforcement learning (E2E RL) preserves OOD capabilities, but incurs high compute costs due to many turns of on-policy rollout.
We introduce \textbf{PivotRL}, a novel framework that operates on existing SFT trajectories to combine the compute efficiency of SFT with the OOD accuracy of E2E RL. PivotRL relies on two key mechanisms: first, it executes local, on-policy rollouts and filters for \emph{pivots}: informative intermediate turns where sampled actions exhibit high variance in outcomes; second, it utilizes rewards for functional-equivalent actions rather than demanding strict string matching with the SFT data demonstration.
{We theoretically show that these mechanisms incentivize strong  learning signals with high natural gradient norm, while maximally preserving policy probability ordering on actions unrelated to training tasks.}
In comparison to standard SFT on identical data, we demonstrate that
PivotRL achieves $+4.17\%$ higher in-domain accuracy on average across four agentic domains, and $+10.04\%$ higher OOD accuracy in non-agentic tasks.
{Notably, on agentic coding tasks, PivotRL achieves competitive accuracy with E2E RL with $4\times$ fewer rollout turns.}
{PivotRL is adopted by NVIDIA's Nemotron-3-Super-120B-A12B, acting as the workhorse in production-scale agentic post-training.}
\end{abstract}

\maketitle

\section{Introduction}

Long-horizon agentic tasks require many turns of Large Language Model (LLM) interaction with an environment. This includes tasks such as conversational tool use \citep{schick2023toolformer, qin2023toolllm}, agentic coding \citep{jimenez2024swebench}, terminal interaction \citep{xie2024osworld}, and web search \citep{yao2022webshop, wei2025browsecompsimplechallengingbenchmark}. These tasks have emerged as the frontier in AI systems capable of executing complex, real-world workflows. However, post-training for these long-horizon agentic capabilities introduces a fundamental tension between training efficiency and generalization. 
While supervised fine-tuning (SFT) offers a compute-efficient mechanism for acquiring new capabilities, it frequently struggles to generalize beyond the training distribution and catastrophically degrades out-of-domain (OOD) performance \citep{chu2025sftmemorizesrlgeneralizes, luo2023empiricalCF}. In contrast, end-to-end reinforcement learning (E2E RL) typically yields higher in-domain accuracy while robustly retaining OOD capabilities \citep{ouyang2022training, chen2025retainingdoingroleonpolicy}. Nevertheless, E2E RL incurs high compute overheads, as each parameter update necessitates repeated, many turns of on-policy rollout with environmental interactions. This dichotomy naturally raises the following question:
\begin{center}
    \emph{Can we combine the data efficiency of SFT with the generalization capabilities of E2E RL, achieving both in-domain accuracy and OOD retention without incurring full-trajectory rollouts?}
\end{center}
To address this challenge, a natural {attempt would be to repurpose SFT trajectory demonstrations for RL.} 
Specifically, one could sample on-policy rollouts conditioned on a randomly selected intermediate turn from an SFT trajectory, assigning a positive reward only if the generated action exactly matches the SFT data demonstration. While this mitigates the under-utilization of SFT data which only trains on a single demonstrated completion, our preliminary experiments reveal that it fails to improve OOD accuracy relative to standard SFT on the same data. We empirically trace this failure to two bottlenecks. First, randomly selected intermediate turns frequently provide negligible learning signals; under Group Relative Policy Optimization (GRPO)~\citep{shao2024deepseekmath}, sampled actions at such turns often uniformly succeed or fail, yielding a normalized advantage close to zero and thus producing no meaningful gradient update. Second, exact string matching with SFT data is excessively restrictive in generative action spaces, where numerous functionally equivalent actions validly diverge from the singular demonstrated completion. For instance, a tool call or search query may be perfectly appropriate despite not perfectly matching the demonstration. 

Motivated by these insights, we introduce \textbf{PivotRL}, a novel framework for long-horizon agentic RL to overcome both bottlenecks simultaneously. PivotRL executes local, on-policy rollouts and filters for \emph{pivots}: informative assistant turns where sampled actions exhibit mixed outcomes (i.e., both successes and failures). Consequently, training requires only brief, partial rollouts from these pivot states rather than exhaustive full trajectories. To effectively score these rollouts, we leverage domain-appropriate verifiers to assign rewards for functionally equivalent actions rather than strictly penalizing deviations from SFT data demonstrations \citep{shao2024deepseekmath, rohatgi2025tamingprocessverifiers}.

We substantiate our methodology with a lightweight theoretical analysis on our core design choices. 
{First, we prove that the Fisher norm of the natural gradient of the statewise reward objective scales with the reward standard deviation. Consequently, the GRPO update along the KL path scales directly with this variance, validating our strategy of filtering for mixed-outcome pivots to maximize the local in-domain learning signals.}
{Second, we show that functional reward-based RL shifts probability mass toward actions that are functionally equivalent to the expert demonstration, while preserving the conditional distributions on all other actions. This maintains the reference policy's relative ordering of task-unrelated actions, thereby mitigating out-of-domain (OOD) degradation.} 

Empirically, we first show that under identical training data---the same prompts and expert trajectories---PivotRL yields larger in-domain improvements than SFT while exhibiting substantially less OOD regression. 
{When training in the conversational tool use, agentic coding, terminal tool use and search domain,}
PivotRL achieves an average in-domain accuracy improvement of $+14.11$\footnote{Throughout this paper, performance scores and differences are reported in percentage points.}over the base model, which significantly improves the $+9.94$ points attained by SFT. Crucially, PivotRL does not introduce OOD regressions (+$0.21$ change), whereas SFT suffers a severe regression of $-9.48$ in non-agentic domains including math, science QA and competitive coding. Second, we provide a direct comparison with E2E RL on SWE-Bench, a rigorous standard for evaluating long-horizon agentic capabilities with well-established E2E RL baselines~\citep{jimenez2024swebench, wang2025openhandsopenplatformai}. On this benchmark, we demonstrate that PivotRL attains competitive accuracy with E2E RL, but only with a {$4\times$ fewer rollout turns.} 
{PivotRL is deployed at production scale for leading open LLMs. Along with SFT and E2E RL, it serves as the primary workhorse for agentic post-training for NVIDIA's Nemotron-3-Super LLM~\citep{nvidia2025nemotron3super}.}


\section{Preliminaries and Motivating Observations}
\label{sec:prelim}

\paragraph{Turn-level agentic training.}
Let $\tau$ be an interaction trajectory collected in agentic tasks. Decomposing $\tau$ at assistant decision boundaries gives $\tau=(s_{0},a_{0}^*,\ldots,s_{T},a_{T}^*)$, where $s_{t}$ is the full interaction history from the beginning of trajectory $\tau_i$ up to, but not including, the $t$-th assistant action, and $a_{t}^*$ is the demonstrated assistant completion at that state.
In turn-level agentic training, an \emph{action} is the full assistant completion at a model-call boundary, instead of an individual token. The standard negative log-likelihood loss for SFT is given by
\begin{align*}
    \mathcal{L}_{\mathrm{sft}}(\theta) = -\mathbb{E}_{\tau \sim \mathcal{D}_{\mathrm{sft}}, (s_t,a_t^*) \sim \tau} [\log \pi_\theta(a^*_t \mid s_t)].
\end{align*}
Let $\pi_0$ denote the reference policy, E2E RL samples rollouts from $\pi_{\old}$ and optimizes the GRPO~\citep{shao2024deepseekmath} objective given by:
\begin{align*}
    \mathcal{J}_{\mathrm{GRPO}}(\theta) = \mathbb{E}_{\substack{s \sim \mathcal{D}_{\pi_{\old}} \\ a_i \sim \pi_{\old}(\cdot \mid s)}} \left[ \frac{1}{G} \sum_{i=1}^G \min \left( w_i(\theta) \hat{A}_i, \operatorname{clip}(w_i(\theta), 1-\epsilon, 1+\epsilon) \hat{A}_i \right) - \beta \KL(\pi_\theta(\cdot \mid s) \parallel \pi_0(\cdot \mid s)) \right]
\end{align*}
where $w_i(\theta) = \frac{\pi_\theta(a_i \mid s)}{\pi_{\old}(a_i \mid s)}$ is the importance sampling weight, $\mathcal{D}_{\pi_{\old}}$ denotes the distribution of states $s_t$ from the on-policy rollout trajectories, and 
\begin{equation}
\hat{A}_i
=
\frac{
r_i-\frac{1}{G}\sum_{j=1}^G r_j
}{
\mathrm{std}(\{r_j\}_{j=1}^G) + \epsilon_{\mathrm{std}}
}.
\label{eq:group_adv}
\end{equation}
represents the advantage normalized across the group of $G$ sampled end-to-end rollouts.

\paragraph{Local RL from expert trajectories.}
To circumvent the computational overhead associated with full end-to-end trajectory generation, we consider a local RL paradigm. In this setting, instead of unrolling full interactions from the initial environment state until termination, we condition the policy on intermediate expert states $s_t$ derived directly from the SFT dataset and then conduct targeted, turn-level rollouts from these states. Given expert trajectory dataset $\mathcal{D}_{\mathrm{sft}}=\{\tau_i\}_{i=1}^N$, the simplest attempt to adapt it for local RL goes as follows: first sample a state $s_t$, then sample actions from $\pi_{\theta}(\cdot \mid s_t)$ and reward the action only when it exactly matches the demonstrated continuation:
\begin{equation}
r_{\mathrm{strict}}(s,a)=\mathbf{1}[a=a^*(s)].
\label{eq:strict_reward}
\end{equation}
This formulation represents the most direct translation of SFT demonstrations into a local RL framework: the interaction history is rigidly anchored to the expert trace, the subsequent action is sampled on-policy, and credit is sparsely assigned only for perfect replication of the demonstrated completion. However, empirical evaluations in $\tau^2$-Bench reveal that this naive local RL strategy yields merely marginal gains over standard behavior cloning on identical data, achieving an accuracy of $57.34$ compared to $58.44$ for same-data SFT. Consequently, this simplistic conversion of expert demonstrations into local RL episodes via exact-match reward functions proves inadequate for driving meaningful performance improvements.

\paragraph{Motivating observations.}
Through our preliminary experiments, we identify two bottlenecks in allocating rollout budgets and assigning local credit. \emph{First, turns with uniformly successful or failed actions are uninformative under group-normalized RL.} Indeed, if a batch of rewards $\{r_i\}_{i=1}^G$ consists entirely of zeros or ones, then the normalized advantage in Eq.~\eqref{eq:group_adv} evaluates to zero. Empirically on $\tau^2$-bench and SWE-Bench, $71\%$ of randomly sampled turns yield a $0$ learning signal, meaning they are uniformly solved or uniformly failed and contribute nothing to the gradient. \emph{Second, exact-match local credit is too strict.} In generative action spaces, many tool calls, shell commands, or search steps are locally acceptable without matching the single demonstrated string exactly. Comparing exact matching to a more permissive verifier-based reward $r_{\mathrm{func}}$ (introduced in Section~\ref{sec:pivotrl_method}), we define the miss rate as $\Pr\!\left[
r_{\mathrm{strict}}(s,a)=0
\;\middle|\;
r_{\mathrm{func}}(s,a)=1
\right].$
A high miss rate means that exact matching erroneously discards rollouts that are functionally correct at the local decision point. These two bottlenecks map directly to the two ingredients of PivotRL: we first filter for \emph{pivots} (informative turns that continue to produce mixed outcomes), and then replace exact-match local credit with a verifier that rewards locally acceptable actions.

\section{PivotRL}
\label{sec:pivotrl}

PivotRL modifies the naive local-RL baseline from Section~\ref{sec:prelim} in exactly two ways: it filters extracted turns so that online rollout budget is spent on informative states, and it replaces exact-match local credit with verifier-based reward. We now introduce the full training pipeline and then give two theoretical results that help explain these choices. Section~\ref{sec:pivotrl_method} presents the method. Section~\ref{sec:pivotrl_analysis} then studies the two components separately: Proposition~\ref{prop:mixed} and Theorem~\ref{thm:pivot} explain why turns with very different responses provide stronger local learning signal under group-normalized RL, while Theorem~\ref{thm:kl} shows that the verifier-based reward shifts probability mass toward acceptable actions while remaining conservative relative to the reference policy.

\subsection{Method}
\label{sec:pivotrl_method}

In turn-level training, we extract assistant turns from each trajectory $\tau=(s_0,a_0^*,\ldots,s_T,a_T^*)$ into a \emph{pivot candidate} dataset:
\begin{equation}
\mathcal{D}_{\mathrm{cand}}=\{(s_t,a_t^*)\}_{\tau \in \mathcal{D}_{\mathrm{SFT}}, ~t = 0,\dots,T}.
\label{eq:cand}
\end{equation}
PivotRL performs three steps: (i) profile turns offline and retain only those likely to remain informative, (ii) sample local on-policy rollouts at the retained turns, and (iii) optimize a verifier-based GRPO-style objective. We summarize the full procedure in Algorithm~\ref{alg:pivotrl}. The first step addresses the uninformative-turn bottleneck from Section~\ref{sec:prelim}; the second addresses the overly strict local-credit bottleneck.

\paragraph{Offline turn selection.}
We estimate the informativeness of each extracted turn under a frozen reference policy $\pi_0$, typically the policy used to initialize PivotRL. For a turn state $s$, we sample $K$ local rollouts from $\pi_0(\cdot\mid s)$, score them with the verifier, and compute
\begin{equation}
\hat{\mu}(s)
=
\frac{1}{K}\sum_{k=1}^K r_{\mathrm{func}}(s,a^{(k)}),
\qquad
\hat{\sigma}^2(s)
=
\frac{1}{K}\sum_{k=1}^K \bigl(r_{\mathrm{func}}(s,a^{(k)})-\hat{\mu}(s)\bigr)^2.
\label{eq:profile_stats}
\end{equation}
We then keep only turns with nonzero empirical reward variance and low reward mean, 
\begin{align}
\mathcal{D}_{\mathrm{adv}}
=
\left\{
(s,a^*)\in \mathcal{D}_{\mathrm{cand}}
\;:\;
\hat{\sigma}^2(s)>0, ~~
\hat{\mu}(s)<\lambda_{\mathrm{diff}}
\right\}.
\label{eq:dadv}
\end{align}
The first filter removes turns that are already uniformly solved or uniformly failed under the reference policy; the second concentrates training on mixed-outcome turns that are still difficult. This filtered subset is thus called \emph{pivot} used for PivotRL training.
We write $\mathcal{D}_{\mathrm{pivot}}$ for the retained training set, and unless otherwise stated use $\mathcal{D}_{\mathrm{pivot}}=\mathcal{D}_{\mathrm{adv}}$.

\paragraph{Verifier-based local reward.}
For a retained state $s$ (present in $\mathcal{D}_{adv}$), let $\mathcal{M}(s)\subseteq \mathcal{A}(s)$ denote the set of locally acceptable actions under a domain-specific verifier. PivotRL assigns reward
\begin{equation}
r_{\mathrm{func}}(s,a)=\mathbf{1}[a\in \mathcal{M}(s)].
\label{eq:func_reward}
\end{equation}
Relative to the strict local reward in Eq.~\eqref{eq:strict_reward}, this verifier credits any action that is acceptable at the current turn, not only the single demonstrated completion. Depending on the domain, the verifier may be a normalized string/schema check, a task-specific equivalence rule, or a lightweight LLM judge.

Given a retained turn set $\mathcal{D}_{\mathrm{pivot}}$ and rollout group size $G$, PivotRL samples $\{a_i\}_{i=1}^G \sim \pi_{\theta_{\mathrm{old}}}(\cdot\mid s)$ at each selected state and optimizes
\begin{equation}
\mathcal{J}_{\mathrm{PivotRL}}(\theta)
=
\mathbb{E}_{\substack{
s\sim \mathcal{D}_{\mathrm{pivot}}\\
\{a_i\}_{i=1}^G \sim \pi_{\theta_{\mathrm{old}}}(\cdot\mid s)
}}
\left[
\frac{1}{G}
\sum_{i=1}^G
\min\!\Bigl(
w_i(\theta)\hat{A}_i,\,
\mathrm{clip}(w_i(\theta),1-\epsilon,1+\epsilon)\hat{A}_i
\Bigr)
- D_{\text{KL}}
\right],
\label{eq:pivotrl_obj}
\end{equation}
where $D_{\text{KL}} = \beta \KL(\pi_\theta(\cdot \mid s) \parallel \pi_0(\cdot \mid s))$ and
\begin{equation}
w_i(\theta)
=
\frac{\pi_\theta(a_i\mid s)}{\pi_{\theta_{\mathrm{old}}}(a_i\mid s)},
\label{eq:importance_weight}
\end{equation}
and $\hat{A}_i$ is the group-normalized advantage from Eq.~\eqref{eq:group_adv}, computed using the local verifier rewards $\{r_{\mathrm{func}}(s,a_i)\}_{i=1}^G$. Relative to end-to-end RL, the only online interaction during training is the short rollout needed to score each sampled turn-level action.

\begin{algorithm}[t]
\caption{PivotRL}
\label{alg:pivotrl}
\begin{algorithmic}[1]
\State \textbf{Require:} expert trajectories $\mathcal{D}_{\mathrm{sft}}$, reference policy $\pi_0$, initial policy $\pi_\theta$, verifier $V$, rollout group size $G$, difficulty threshold $\lambda_{\mathrm{diff}}$
\State Extract turns $\mathcal{D}_{\mathrm{cand}}=\{(s_t,a_t^*)\}$ from $\mathcal{D}_{\mathrm{sft}}$
\State Profile each $s\in\mathcal{D}_{\mathrm{cand}}$ with Eq.~\eqref{eq:profile_stats}
\State Set $\mathcal{D}_{\mathrm{pivot}}\leftarrow \mathcal{D}_{\mathrm{adv}}$
\For{$k=1,\dots,K_{\mathrm{train}}$}
    \State Sample minibatch $\{s_b\}_{b=1}^B \sim \mathcal{D}_{\mathrm{pivot}}$
    \For{each $s_b$}
        \State Sample $\{a_{b,i}\}_{i=1}^G \sim \pi_{\theta_{\mathrm{old}}}(\cdot\mid s_b)$
        \State Execute the short rollout needed to score each $a_{b,i}$
        \State $r_{b,i}\leftarrow r_{\mathrm{func}}(s_b,a_{b,i})$ for $i=1,\dots,G$
        \State Compute $\hat{A}_{b,i}\leftarrow \dfrac{r_{b,i}-\frac{1}{G}\sum_{j=1}^G r_{b,j}}{\mathrm{std}(\{r_{b,j}\}_{j=1}^G) + \epsilon_{\mathrm{std}}}$
    \EndFor
    \State Update $\theta$ using the objective in Eq.~\eqref{eq:pivotrl_obj}
\EndFor
\end{algorithmic}
\end{algorithm}

\subsection{Theoretical analysis for PivotRL}
\label{sec:pivotrl_analysis}

We next analyze the two design choices from Section~\ref{sec:pivotrl_method}. We first formalize why mixed-outcome turns are the right states for group-normalized local RL. We then show that verifier-based reward yields a conservative KL-regularized update: it increases total mass on locally acceptable actions while preserving the reference ordering within and outside that acceptable set. Throughout this subsection, we assume finite action spaces, $\mathcal{M}(s)\neq\emptyset$, and $\pi_0(a\mid s)>0$ for all $s$ and $a$.

\begin{proposition}[Only mixed-outcome turns produce nonzero group-normalized updates]
\label{prop:mixed}
Let $s$ be a fixed state and let $\{r_i\}_{i=1}^G$ be the binary rewards of a rollout group at $s$. If all rewards are identical, then the normalized advantages in Eq.~\eqref{eq:group_adv} are zero for every $i$. Equivalently, only rollout groups with positive reward variance can contribute a nonzero group-normalized update.
\end{proposition}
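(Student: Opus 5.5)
The argument is a direct computation from the definition of the group-normalized advantage in Eq.~\eqref{eq:group_adv}. I will take $\epsilon_{\mathrm{std}}>0$, as in the algorithm, so that the denominator never vanishes. Otherwise the constant-reward case would give $0/0$; the convention $\hat{A}_i=0$ would then be adopted, and it gives the same conclusion.

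First, suppose $r_1=\dots=r_G=c$ with $c\in\{0,1\}$. The group mean is $\frac{1}{G}\sum_j r_j=c$, so every numerator satisfies $r_i-c=0$. The empirical standard deviation is also $0$, so the denominator equals $\epsilon_{\mathrm{std}}>0$. Hence $\hat{A}_i=0/\epsilon_{\mathrm{std}}=0$ for every $i$, which is the first claim.

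For the ``equivalently'' part, I will show that the empirical variance $\frac{1}{G}\sum_i(r_i-\bar r)^2$ is zero exactly when all the $r_i$ are equal. This holds because it is a sum of nonnegative squares. So a group with zero reward variance is precisely a constant group, and its advantages all vanish.

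It remains to check that zero advantages produce no policy-gradient contribution. When $\hat{A}_i=0$, both branches of the clipped surrogate in Eq.~\eqref{eq:pivotrl_obj} are identically $0$ as functions of $\theta$: each is $w_i(\theta)\cdot 0$ or $\mathrm{clip}(w_i(\theta),1-\epsilon,1+\epsilon)\cdot 0$, and the minimum of two zero functions is zero. The gradient of that group's surrogate term is therefore zero, and only the KL regularizer remains. The KL term does not come from the group-normalized advantage, so the only contributions to the group-normalized update come from groups with positive reward variance. I expect no step to be a genuine obstacle. The only subtlety is handling the denominator, either through $\epsilon_{\mathrm{std}}>0$ or through the $0/0$ convention.
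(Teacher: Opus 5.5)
Your proof is correct and uses the same reasoning the paper gives; the paper has no separate proof, only the remark in Section~\ref{sec:prelim} that a constant reward group makes the centered numerator of Eq.~\eqref{eq:group_adv} vanish. Your handling of the denominator via $\epsilon_{\mathrm{std}}>0$ and your check that the clipped surrogate is then identically zero in $\theta$ fill in details the paper leaves implicit.
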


Proposition~\ref{prop:mixed} is the direct reason to filter turns before RL: if a turn is uniformly easy or uniformly impossible under local sampling, then spending rollout budget on that turn does not change the policy under group-normalized training.

For any distribution $\pi$ over $\mathcal{A}(s)$, let
\[
T_\pi
=
\left\{
v:\mathcal{A}(s)\to\mathbb{R}
\;:\;
\mathbb{E}_{a\sim\pi}[v(a)]=0
\right\},
\qquad
\langle u,v\rangle_{F,\pi}
=
\mathbb{E}_{a\sim\pi}[u(a)v(a)],
\]
and let $\nabla^{\mathrm{nat}}J_s(\pi)\in T_\pi$ denote the natural gradient of $J_s$ under this Fisher geometry.

\begin{theorem}[Reward variance determines the local GRPO signal]
\label{thm:pivot}
Consider the statewise expected reward objective
\[
J_s(\pi)=\mathbb{E}_{a\sim\pi}[r(s,a)],
\]
and the KL path
\[
\pi_{s,\beta}(a)
=
\frac{
\pi_0(a\mid s)e^{r(s,a)/\beta}
}{
\sum_{b\in \mathcal{A}(s)} \pi_0(b\mid s)e^{r(s,b)/\beta}
}.
\]
Define the population GRPO score
\begin{equation}
\gamma_{s,\beta}
:=
-
\mathbb{E}_{a\sim \pi_{s,\beta}}
\left[
\frac{
r(s,a)-\mathbb{E}_{a'\sim\pi_{s,\beta}}[r(s,a')]
}{
\sqrt{\mathrm{Var}_{a'\sim\pi_{s,\beta}}(r(s,a'))}
}
\,\partial_\beta \log \pi_{s,\beta}(a)
\right].
\label{eq:grpo_score}
\end{equation}
Then
\begin{equation}
\gamma_{s,\beta}
=
\frac{1}{\beta^2}
\left\|
\nabla^{\mathrm{nat}}J_s(\pi_{s,\beta})
\right\|_{F,\pi_{s,\beta}}
=
\frac{
\sqrt{\mathrm{Var}_{a\sim\pi_{s,\beta}}(r(s,a))}
}{\beta^2}.
\label{eq:pivot_theorem}
\end{equation}
In particular, at fixed $\beta$, states with larger reward variance induce a larger natural-gradient norm and a larger population GRPO score.
\end{theorem}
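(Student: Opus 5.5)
Both equalities follow from explicit computation, so the plan is to evaluate each side directly. Write $\pi=\pi_{s,\beta}$, $r(a)=r(s,a)$, $\bar r=\mathbb{E}_{\pi}[r]$ and $\sigma^2=\mathrm{Var}_{\pi}(r)$. Since the GRPO score is only defined when $\sigma>0$, we assume $r$ is non-constant under $\pi$; by $\pi_0>0$ this holds exactly when $r$ is non-constant on $\mathcal{A}(s)$.

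First I compute the $\beta$-derivative of the log-density along the KL path. Put $Z_\beta=\sum_b\pi_0(b\mid s)e^{r(b)/\beta}$. Then $\log\pi(a)=\log\pi_0(a\mid s)+r(a)/\beta-\log Z_\beta$. Differentiating,
\[
\partial_\beta\log\pi(a)=-\frac{r(a)}{\beta^2}+\frac{1}{\beta^2}\,\frac{\sum_b\pi_0(b\mid s)r(b)e^{r(b)/\beta}}{Z_\beta}=-\frac{r(a)-\bar r}{\beta^2}.
\]
Substituting this into Eq.~\eqref{eq:grpo_score} gives
\[
\gamma_{s,\beta}=\frac{1}{\beta^2\sigma}\,\mathbb{E}_\pi\bigl[(r-\bar r)^2\bigr]=\frac{\sigma}{\beta^2}.
\]
That is the second equality in Eq.~\eqref{eq:pivot_theorem}.

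Next I identify the natural gradient in the stated Fisher geometry. The tangent space $T_\pi$ consists of score-function perturbations $v$ with $\mathbb{E}_\pi v=0$, corresponding to the curve $\pi_t(a)\propto\pi(a)(1+tv(a))$. The differential of $J_s$ in the direction $v$ is
\[
\frac{d}{dt}\Big|_{t=0}\mathbb{E}_{\pi_t}[r]=\mathbb{E}_\pi[rv]=\langle r-\bar r,\,v\rangle_{F,\pi},
\]
where the last step uses $\mathbb{E}_\pi v=0$. The natural gradient is the Riesz representer of this linear functional in $(T_\pi,\langle\cdot,\cdot\rangle_{F,\pi})$. Since $r-\bar r\in T_\pi$, we get $\nabla^{\mathrm{nat}}J_s(\pi)=r-\bar r$, and its Fisher norm is $\sqrt{\mathbb{E}_\pi(r-\bar r)^2}=\sigma$.

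The main point needing care is normalization, because the displayed statement puts the factor $1/\beta^2$ in front of the natural-gradient norm. I would fix the convention so that tangent vectors are log-density derivatives. Under that convention, the velocity of the KL path in $T_\pi$ is exactly $\partial_\beta\log\pi=-(r-\bar r)/\beta^2=-\beta^{-2}\nabla^{\mathrm{nat}}J_s(\pi)$. With this identification, $\gamma_{s,\beta}=\langle\widehat{A},-\partial_\beta\log\pi\rangle_{F,\pi}$, where $\widehat{A}=(r-\bar r)/\sigma$ is the population normalized advantage. This equals $\beta^{-2}\langle\widehat{A},\nabla^{\mathrm{nat}}J_s\rangle_{F,\pi}=\beta^{-2}\|\nabla^{\mathrm{nat}}J_s\|_{F,\pi}$, because $\widehat{A}$ is precisely the unit vector along $\nabla^{\mathrm{nat}}J_s$. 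Writing this chain out gives the first equality in Eq.~\eqref{eq:pivot_theorem} independently of the direct computation above.

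The concluding monotonicity claim is then immediate. At fixed $\beta$, both quantities equal $\sigma/\beta^2$ and are therefore increasing in the reward standard deviation. The degenerate case $\sigma=0$, where the score is undefined and the natural gradient vanishes, is consistent with Proposition~\ref{prop:mixed}.
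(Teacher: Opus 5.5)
Your proposal is correct and follows the paper's proof step for step. You compute $\partial_\beta\log\pi_{s,\beta}=-(r-\bar r)/\beta^2$, substitute to get $\gamma_{s,\beta}=\sigma/\beta^2$, and identify $\nabla^{\mathrm{nat}}J_s(\pi)=r-\bar r$ as the Riesz representer with Fisher norm $\sigma$. The differences are cosmetic: you use the mixture curve $\pi(1+tv)$ where the paper uses the exponential tilt $\pi e^{tv}/Z$ (both have log-derivative $v$ at $t=0$), and your ``normalization'' paragraph is an optional reinterpretation, since the first equality already follows from $\|\nabla^{\mathrm{nat}}J_s\|_{F,\pi}=\sigma$.
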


\paragraph{Interpretation.}
Theorem~\ref{thm:pivot} is a population, pathwise statement about the idealized KL-regularized statewise update $\pi_{s,\beta}\propto \pi_0 e^{r(s,\cdot)/\beta}$. It shows that, for group-normalized local RL, reward variance is not just a heuristic diagnostic: it is exactly the scale of the local natural-gradient signal along the KL path. For binary verifier rewards, larger variance means a more mixed success/failure turn, which is precisely why PivotRL filters toward mixed-outcome turns.

\begin{theorem}[Functional reward shifts mass toward acceptable actions with minimal KL change]
\label{thm:kl}
Fix $\beta>0$ and consider the regularized objective
\begin{equation}
\mathcal{L}_{\mathrm{func},\beta}(\pi)
=
\mathbb{E}_{s\sim d}
\left[
-
\mathbb{E}_{a\sim\pi(\cdot\mid s)}[r_{\mathrm{func}}(s,a)]
+
\beta\,\mathrm{KL}\!\bigl(\pi(\cdot\mid s)\,\|\,\pi_0(\cdot\mid s)\bigr)
\right],
\label{eq:kl_obj}
\end{equation}
where $r_{\mathrm{func}}(s,a)=\mathbf{1}[a\in \mathcal{M}(s)]$ and $d$ is a fixed state distribution. For each state $s$, define
\begin{equation}
\rho(s)=\pi_0(\mathcal{M}(s)\mid s),
\qquad
q_\beta(s)
=
\frac{\rho(s)e^{1/\beta}}{(1-\rho(s))+\rho(s)e^{1/\beta}}.
\label{eq:qbeta}
\end{equation}
Then $\mathcal{L}_{\mathrm{func},\beta}$ has a unique minimizer $\pi_\beta^*$ such that, for each state $s$ ($d$-almost surely),
\begin{equation}
\pi_\beta^*(\mathcal{M}(s)\mid s)=q_\beta(s)\ge \rho(s),
\label{eq:mass_shift}
\end{equation}
with strict inequality whenever $0<\rho(s)<1$. Moreover, among all distributions satisfying Eq.~\eqref{eq:mass_shift}, $\pi_\beta^*(\cdot\mid s)$ is the unique minimizer of
\[
\mathrm{KL}\!\bigl(\pi(\cdot\mid s)\,\|\,\pi_0(\cdot\mid s)\bigr),
\]
and it preserves the reference ordering within both $\mathcal{M}(s)$ and its complement $\mathcal{M}(s)^c$:
\begin{align}
\label{eq:conditional-distibution}
\frac{\pi_\beta^*(a\mid s)}{\pi_\beta^*(b\mid s)}
=
\frac{\pi_0(a\mid s)}{\pi_0(b\mid s)}
\quad
\text{for } a,b\in \mathcal{M}(s),
\qquad
\frac{\pi_\beta^*(a\mid s)}{\pi_\beta^*(b\mid s)}
=
\frac{\pi_0(a\mid s)}{\pi_0(b\mid s)}
\quad
\text{for } a,b\in \mathcal{M}(s)^c.
\end{align}
\end{theorem}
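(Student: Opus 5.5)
The plan is to decouple the objective across states and then solve each statewise problem in closed form via the Gibbs variational principle. Because $d$ is fixed and the integrand in Eq.~\eqref{eq:kl_obj} depends on $\pi$ only through $\pi(\cdot\mid s)$, it suffices to minimize, for each $s$, the strictly convex function
\[
\ell_s(\pi) = -\mathbb{E}_{a\sim\pi}[r_{\mathrm{func}}(s,a)] + \beta\,\mathrm{KL}(\pi\,\|\,\pi_0(\cdot\mid s))
\]
over the simplex on the finite set $\mathcal{A}(s)$. Uniqueness of the global minimizer then holds $d$-almost surely: any other minimizer must agree with the statewise solution wherever $d$ puts mass.

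\textbf{Statewise solution.} Write $Z=\sum_b \pi_0(b\mid s)e^{r(s,b)/\beta}$. Then
\[
\ell_s(\pi)=\beta\,\mathrm{KL}(\pi\,\|\,\pi_{s,\beta})-\beta\log Z,
\]
where $\pi_{s,\beta}\propto\pi_0 e^{r/\beta}$ is exactly the KL path of Theorem~\ref{thm:pivot}. Hence the unique minimizer is $\pi^*_\beta(a\mid s)=\pi_0(a\mid s)e^{\mathbf{1}[a\in\mathcal{M}(s)]/\beta}/Z$, with $Z=(1-\rho)+\rho e^{1/\beta}$. Summing over $\mathcal{M}(s)$ gives $\pi^*_\beta(\mathcal{M}(s)\mid s)=q_\beta(s)$.

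\textbf{Inequality and ratio preservation.} Writing
\[
q_\beta-\rho=\frac{\rho(1-\rho)(e^{1/\beta}-1)}{Z}
\]
shows $q_\beta\ge\rho$, with strict inequality exactly when $0<\rho<1$, since $e^{1/\beta}>1$. The ordering statements in Eq.~\eqref{eq:conditional-distibution} are immediate: the tilt factor $e^{r/\beta}$ is constant on $\mathcal{M}(s)$ and constant on $\mathcal{M}(s)^c$, so it cancels in any ratio within either set.

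\textbf{KL-projection characterization.} This is the only step needing a real argument. I will use the chain rule for KL with respect to the partition $\{\mathcal{M},\mathcal{M}^c\}$. For any $\pi$ with $\pi(\mathcal{M})=q$,
\[
\mathrm{KL}(\pi\,\|\,\pi_0)=\mathrm{kl}(q\,\|\,\rho)+q\,\mathrm{KL}(\pi_{|\mathcal{M}}\,\|\,\pi_{0|\mathcal{M}})+(1-q)\,\mathrm{KL}(\pi_{|\mathcal{M}^c}\,\|\,\pi_{0|\mathcal{M}^c}),
\]
where $\mathrm{kl}$ is binary KL and $\pi_{|\cdot}$ denotes the conditional distribution. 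With $q=q_\beta$ fixed, the first term is constant. The remaining terms are nonnegative and vanish iff both conditionals equal those of $\pi_0$. Care is needed in the edge cases $q\in\{0,1\}$, where one conditional is undefined; these are handled by noting that the corresponding term carries zero weight.

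The minimizer is therefore unique and is the distribution with mass $q_\beta$ on $\mathcal{M}$ and reference conditionals on each set, which is exactly $\pi^*_\beta$. Note that Eq.~\eqref{eq:mass_shift} is read as the equality constraint $\pi(\mathcal{M})=q_\beta$. If one instead read it as the inequality $\pi(\mathcal{M})\ge q_\beta$, the same decomposition still works, since $\mathrm{kl}(q\,\|\,\rho)$ is increasing in $q$ for $q\ge\rho$ and so the minimum is attained at $q=q_\beta$.
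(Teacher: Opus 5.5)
Your proof is correct, but it reaches the minimizer by a different route than the paper.

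The paper never uses the Gibbs identity. It applies the KL chain rule over $\{\mathcal{M}(s),\mathcal{M}(s)^c\}$ directly to the objective and notes that for fixed $q=p(\mathcal{M}(s))$ the only minimizer is the block-rescaled $\bar p_q$. It then minimizes the strictly convex scalar function $\phi_s(q)=-q+\beta\bigl[q\log\frac{q}{\rho}+(1-q)\log\frac{1-q}{1-\rho}\bigr]$ by solving $\phi_s'(q)=0$, handling $\rho\in\{0,1\}$ separately. In that approach one decomposition does double duty. The block structure, and hence the KL-projection and ratio-preservation claims, appear before $q_\beta$ is computed. It also makes visible that the problem depends on $\pi_0$ only through $\rho(s)$.

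Your identity $\ell_s(\pi)=\beta\,\mathrm{KL}(\pi\,\|\,\pi_{s,\beta})-\beta\log Z$ instead gives the minimizer and its uniqueness in one line. It needs no case split on $\rho$. It also has no trouble with distributions where $p(\mathcal{M})\in\{0,1\}$, where the paper's conditionals are undefined, a point the paper glosses over. And it shows explicitly that $\pi^*_\beta$ is the point $\pi_{s,\beta}$ on the KL path of Theorem~\ref{thm:pivot}. The price is that you need the chain-rule argument as a second, separate step for the projection claim.

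Your formula $q_\beta-\rho=\rho(1-\rho)(e^{1/\beta}-1)/Z$ also verifies $q_\beta\ge\rho$ and its strictness for $0<\rho<1$, which the paper's proof leaves implicit.
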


\paragraph{Interpretation.}
Theorem~\ref{thm:kl} isolates the effect of the functional reward, establishing that the minimizer policy of functional reward-based RL is the KL-projection of the reference policy onto the set of policies with a higher probability mass $q_\beta(s)$ on acceptable actions. From Eq.~\eqref{eq:conditional-distibution}, functional reward-based RL preserves the conditional distribution on both (i) the set of acceptable actions and (ii) its complement. Since the action space of assistant turns is exponentially large, any given action is generally relevant to only a single task. Under this assumption, the complement of acceptable actions corresponds to the set of task-unrelated actions, and consequently, the relative ranking among task-unrelated actions is preserved, explaining PivotRL's strong retention of OOD performance.

\section{Experiments}
\label{sec:experiments}

PivotRL achieves larger in-domain gains than same-data SFT while nearly eliminating OOD degradation, as shown in Section~\ref{sec:indomain_ood_sft}. On SWE-Bench, where end-to-end RL (E2E RL) is the standard training paradigm, PivotRL attains comparable accuracy without multi-turn environment rollouts, as shown in Section~\ref{sec:swe_e2e}. An ablation confirms that both pivot filtering and functional reward are necessary for the full gains, as shown in Section~\ref{sec:ablation_study}.

\paragraph{Setup.}
We train on four agentic domains separately---conversational tool use, software engineering, terminal control, and web browsing---and evaluate each resulting model on their corresponding benchmarks: $\tau^2$-Bench~\citep{barres2025tau2bench}, SWE-Bench Verified~\citep{jimenez2024swebench}, Terminal-Bench~\citep{tbench_2025}, and BrowseComp~\citep{wei2025browsecompsimplechallengingbenchmark}. All experiments start from \texttt{Qwen3-30B-A3B-Thinking-2507} (hereafter ``Base''), use Nemo-RL~\citep{nemo-rl} for optimization, and Nemo-Gym~\citep{nemo-gym} for environment rollouts. For every SFT--PivotRL comparison, the base model, prompts, and expert trajectories are identical. Domain-specific data construction, verifier design, and hyperparameters appear in Appendix~\ref{app:training_details}.

\subsection{In-Domain and OOD Accuracy}
\label{sec:indomain_ood_sft}

\begin{table}[t]
\centering
\small
\begin{tabular}{lcccc}
\toprule
\textbf{Benchmark} & \textbf{Base} & \textbf{SFT} & \textbf{PivotRL} & \textbf{$\Delta$ vs SFT} \\
\midrule
$\tau^2$-Bench   & 44.35 & 58.44 & 63.81 & +5.37 \\
SWE-Bench Verified     & 19.07 & 37.40 & 32.67 & -4.73 \\
Terminal-Bench   & 5.42  & 13.75 & 20.00 & +6.25 \\
BrowseComp       & 2.50  & 1.50  & 11.30 & +9.80 \\
\bottomrule
\end{tabular}
\caption{
In-domain results. Base denotes \texttt{Qwen3-30B-A3B-Thinking-2507}; SFT and PivotRL use identical training data. PivotRL outperforms SFT on three of four benchmarks, with an average gain of $+14.11$ over Base versus $+9.94$ for SFT.
}
\label{tab:single_domain_main_results}
\end{table}

Table~\ref{tab:single_domain_main_results} reports in-domain accuracy for each training domain. PivotRL improves over SFT on $\tau^2$-Bench ($+5.37$), Terminal-Bench ($+6.25$), and BrowseComp ($+9.80$), achieving an average in-domain gain of $+14.11$ over Base compared to $+9.94$ for SFT.
\begin{table}[t]
\centering
\small
\begin{tabular}{l c cc}
\toprule
\textbf{OOD Benchmark} & \textbf{Base} & \textbf{$\Delta$ SFT} & \textbf{$\Delta$ PivotRL} \\
\midrule
IFBench        & 52.04 & -11.46 & +0.82 \\
AIME25         & 86.04 & -19.72 & -1.20 \\
MATH500        & 98.05 & -8.51  & +0.35 \\
LiveCodeBench  & 66.52 & -7.76  & -0.17 \\
Scicode        & 36.83 & -11.39  & +1.90 \\
MMLU-Pro       & 80.84 & -2.99  & +0.31 \\
MMLU-ProX      & 78.58 & -7.16  & +0.12 \\
WMT24++        & 36.97 & -9.61  & -0.49 \\
\midrule
\textbf{Average} & 66.62 & -9.83 & +0.21 \\
\bottomrule
\end{tabular}
\caption{
OOD performance change ($\Delta$) relative to Base, averaged across the four training runs in Table~\ref{tab:single_domain_main_results}. SFT degrades by $-9.83$ on average; PivotRL maintains near-zero change ($+0.21$).
}
\label{tab:single_domain_ood_avg}
\end{table}
The more important result is OOD retention. Table~\ref{tab:single_domain_ood_avg} reports the average change on eight OOD benchmarks across the four training runs. SFT produces an average OOD change of $-9.83$, with the worst case after terminal-domain training (AIME25 drops by $-64.48$, MATH500 by $-34.50$). PivotRL stays near Base across all benchmarks, with an average change of $+0.21$ and no single benchmark dropping more than $-3.12$.
Table~\ref{tab:single_domain_ood_full} provides the full per-domain breakdown. In every training domain, PivotRL (rl) preserves OOD performance while SFT (sft) causes broad regression---most dramatically after terminal-domain training, where SFT drops AIME25 from $86.04$ to $21.56$.

\begin{table*}[t]
\centering
\small
\begin{tabular}{l c cc cc}
\toprule
\multirow{2}{*}{\textbf{Benchmark}}
  & \multirow{2}{*}{\textbf{Base}}
  & \multicolumn{2}{c}{$\tau^2$}
  & \multicolumn{2}{c}{SWE} \\
\cmidrule(lr){3-4}\cmidrule(lr){5-6}
  & & sft & rl & sft & rl \\
\midrule\midrule

\textbf{Peak In-Domain} & --
  & 58.44 (+14.09) & 63.81 (+19.46)
  & 37.40 (+18.33) & 32.67 (+13.60) \\
\midrule

IFBench    & 52.04
  & 44.42 (-7.62)  & 53.29 (+1.25)
  & 38.80 (-13.24) & 54.79 (+2.75) \\
AIME25     & 86.04
  & 76.25 (-9.79)  & 85.63 (-0.41)
  & 82.29 (-3.75)  & 85.00 (-1.04) \\
MATH500    & 98.05
  & 98.00 (-0.05)  & 98.35 (+0.30)
  & 98.30 (+0.25)  & 98.65 (+0.60) \\
LiveCodeBench & 66.52
  & 64.98 (-1.54)  & 66.96 (+0.44)
  & 57.27 (-9.25)  & 66.96 (+0.44) \\
Scicode    & 36.83
  & 31.95 (-4.88)  & 38.68 (+1.85)
  & 24.85 (-11.98) & 39.87 (+3.04) \\
MMLU-Pro   & 80.84
  & 79.81 (-1.03)  & 81.31 (+0.47)
  & 78.90 (-1.94)  & 81.13 (+0.29) \\
MMLU-ProX  & 78.58
  & 74.31 (-4.27)  & 78.92 (+0.34)
  & 75.58 (-3.00)  & 78.81 (+0.23) \\
WMT24++    & 36.97
  & 34.25 (-2.72)  & 36.54 (-0.43)
  & 35.69 (-1.28)  & 36.66 (-0.31) \\

\midrule\midrule

\multirow{2}{*}{\textbf{Benchmark}}
  & \multirow{2}{*}{\textbf{Base}}
  & \multicolumn{2}{c}{Terminal}
  & \multicolumn{2}{c}{BrowseComp} \\
\cmidrule(lr){3-4}\cmidrule(lr){5-6}
  & & sft & rl & sft & rl \\
\midrule\midrule

\textbf{Peak In-Domain} & --
  & 13.75 (+8.33)  & 20.00 (+14.58)
  & 1.50 (-1.00)   & 11.30 (+8.80) \\
\midrule

IFBench    & 52.04
  & 24.77 (-27.27) & 49.47 (-2.57)
  & 54.35 (+2.31)  & 53.89 (+1.85) \\
AIME25     & 86.04
  & 21.56 (-64.48) & 82.92 (-3.12)
  & 85.21 (-0.83)  & 85.83 (-0.21) \\
MATH500    & 98.05
  & 63.55 (-34.50) & 98.00 (-0.05)
  & 98.30 (+0.25)  & 98.40 (+0.35) \\
LiveCodeBench & 66.52
  & 45.15 (-21.37) & 64.54 (-1.98)
  & 67.62 (+1.10)  & 66.96 (+0.44) \\
Scicode    & 36.83
  & 9.39 (-27.44)  & 39.13 (+2.30)
  & 35.58 (-1.25)  & 37.28 (+0.45) \\
MMLU-Pro   & 80.84
  & 71.57 (-9.27)  & 80.85 (+0.01)
  & 81.13 (+0.29)  & 81.31 (+0.47) \\
MMLU-ProX  & 78.58
  & 59.13 (-19.45) & 78.49 (-0.09)
  & 76.68 (-1.90)  & 78.60 (+0.02) \\
WMT24++    & 36.97
  & 6.31 (-30.66)  & 36.48 (-0.49)
  & 33.18 (-3.79)  & 36.26 (-0.71) \\

\bottomrule
\end{tabular}
\caption{
Per-domain OOD breakdown across four single-domain training runs. Each cell shows score followed by $\Delta$ relative to Base. PivotRL (rl) consistently preserves OOD performance, whereas SFT (sft) causes broad regression---particularly after terminal-domain training.
}
\label{tab:single_domain_ood_full}
\end{table*}


\subsection{Comparison to End-to-End RL on SWE-Bench}
\label{sec:swe_e2e}

SWE-Bench is a natural comparison point because E2E RL is the standard training method for software-engineering agents: each GitHub issue requires a multi-turn tool-using trajectory, and the evaluation harness provides a binary success signal. To compare rollout cost, we count total rollout turns: for PivotRL, each training sample is a single-turn rollout, so the total equals the number of training samples; for E2E RL, we sum the turns across all training trajectories.

All methods start from Base and are evaluated with the OpenHands harness~\citep{wang2025openhandsopenplatformai}. Figure~\ref{fig:swe_speedup} plots accuracy against cumulative rollout turns and cumulative rollout time. To reach the same accuracy, PivotRL requires ${\sim}4\times$ fewer rollout turns and ${\sim}5.5\times$ less wall-clock time on the same number of compute nodes.

\begin{figure*}[t]
    \centering
    \includegraphics[width=\textwidth]{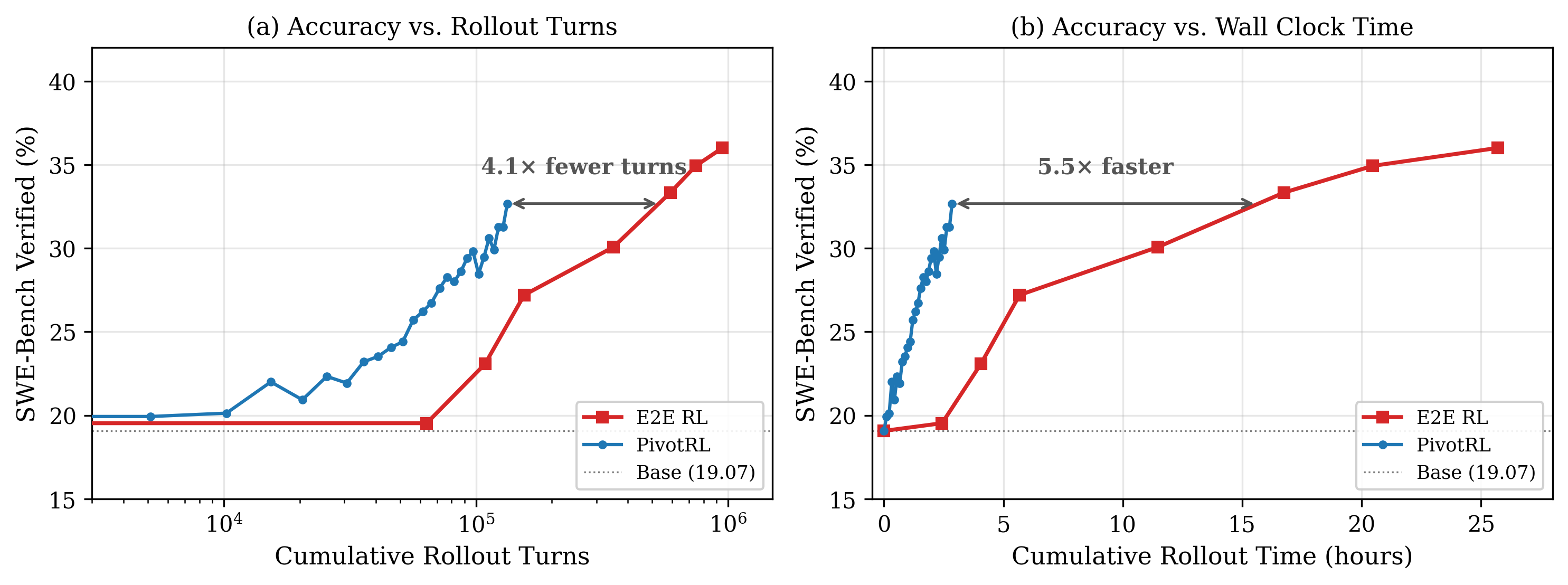}
    \caption{SWE-Bench accuracy vs.\ (a) cumulative rollout turns and (b) cumulative rollout time during training, using the same number of compute nodes. PivotRL reaches comparable accuracy to E2E RL with ${\sim}4\times$ fewer rollout turns and ${\sim}5.5\times$ less wall-clock time.}
    \label{fig:swe_speedup}
\end{figure*}

\subsection{Ablation Study}
\label{sec:ablation_study}

We present ablation results in Table \ref{tab:ablation_main}. To isolate the contribution of each PivotRL component, we remove one at a time on $\tau^2$-Bench.
\begin{table}[t]
\centering
\small
\begin{tabular}{lc}
\toprule
\textbf{Configuration} & \textbf{$\tau^2$-Bench} \\
\midrule
Full PivotRL ($\mathcal{D}_{\mathrm{adv}}$ + functional reward) & 63.81 \\
\quad $-$ Pivot filtering ($\mathcal{D}_{\mathrm{cand}}$ + functional reward) & 59.68 \\
\quad $-$ Functional reward ($\mathcal{D}_{\mathrm{cand}}$ + strict reward) & 57.34 \\
\midrule
Same-data SFT & 58.44 \\
Base & 44.35 \\
\bottomrule
\end{tabular}
\caption{
Ablation on $\tau^2$-Bench. Both pivot filtering and functional reward are necessary; removing either degrades accuracy.
}
\label{tab:ablation_main}
\end{table}
Removing filtering reduces accuracy from $63.81$ to $59.68$; removing functional reward yields $57.34$. Pivots concentrate rollouts on states with nonzero advantage signal (Proposition~\ref{prop:mixed}), and functional reward ensures that correct but textually different actions receive credit.

Figures~\ref{fig:tau_pivot_selection_acc} and~\ref{fig:tau_reward_std} show the training dynamics behind these gains. Under random sampling, per-batch reward variance collapses quickly, indicating that most sampled pivots no longer generate a useful advantage signal. The pivot sets preserve higher reward variance deeper into training and optimize to higher validation accuracy.

\begin{figure}[t]
    \centering
    \begin{minipage}{0.48\textwidth}
        \centering
        \includegraphics[width=\linewidth]{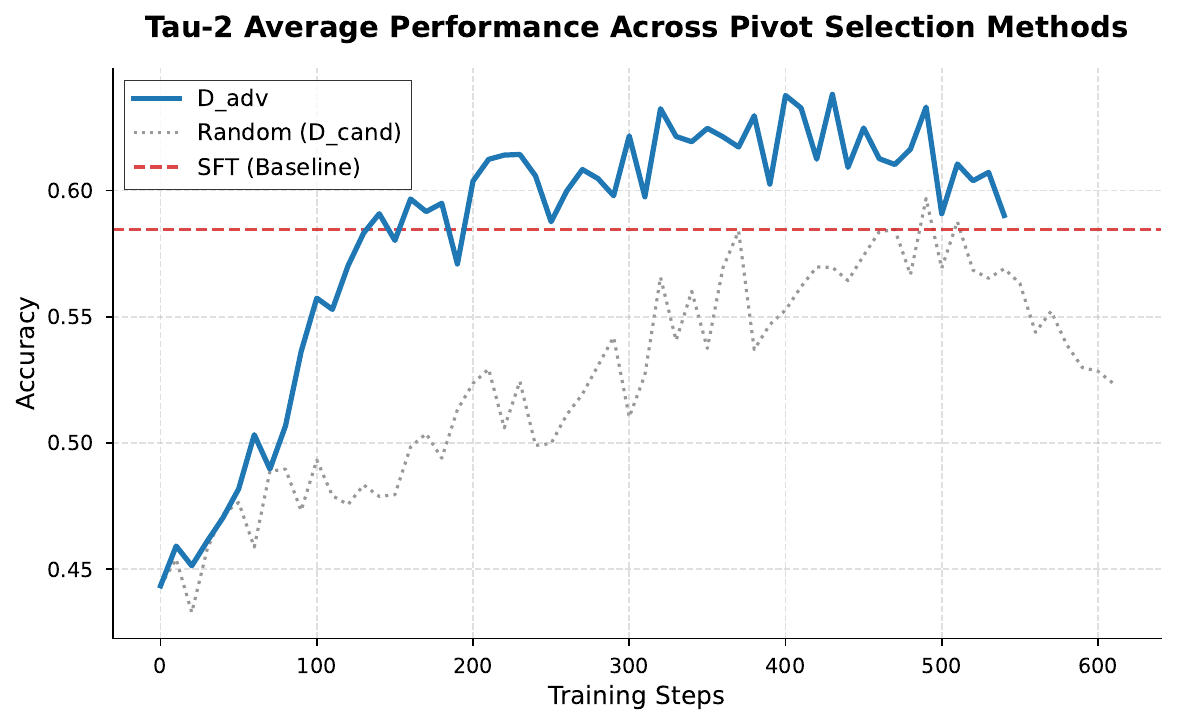}
        \caption{$\tau^2$-Bench accuracy over RL training. $\mathcal{D}_{adv}$ yields the strongest optimization and outperform same-data SFT.}
        \label{fig:tau_pivot_selection_acc}
    \end{minipage}
    \hfill
    \begin{minipage}{0.48\textwidth}
        \centering
        \includegraphics[width=\linewidth]{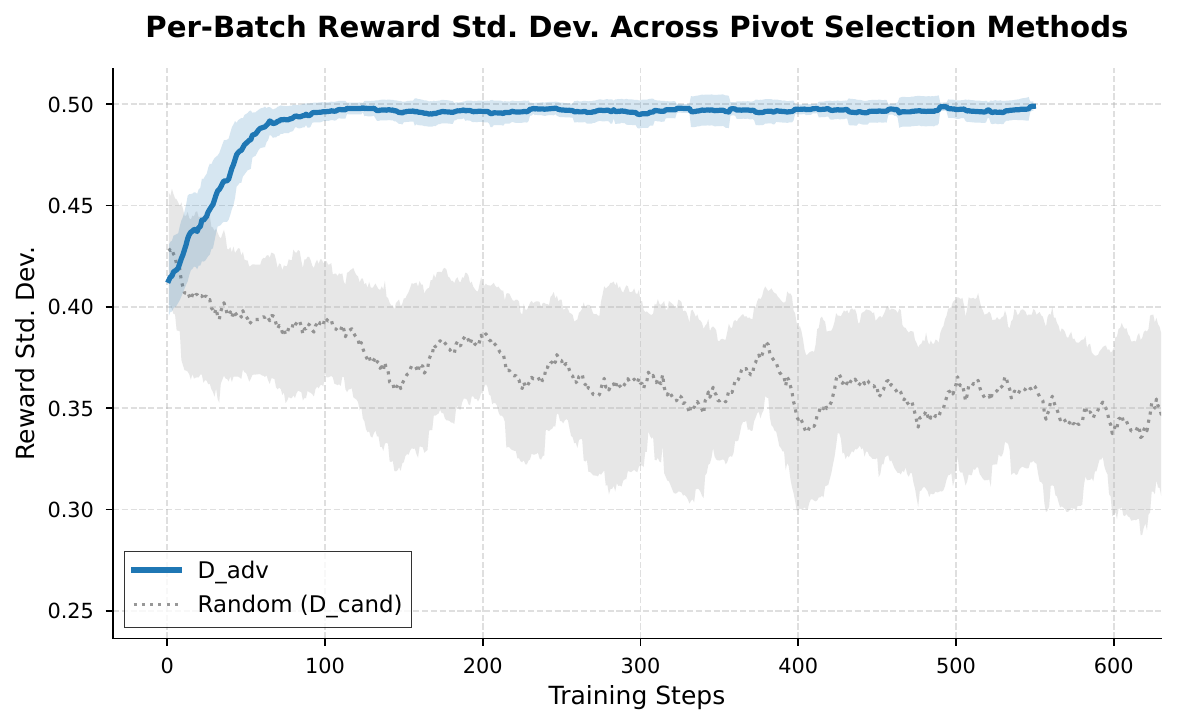}
        \caption{$\tau^2$-Bench reward standard deviation during RL training. $\mathcal{D}_{adv}$ sets maintain higher reward variance and therefore more informative local updates.}
        \label{fig:tau_reward_std}
    \end{minipage}
\end{figure}

\subsection{Integration into Large-Scale Post-Training}
\label{sec:large_scale}

PivotRL was used during the large scale post training of Nemotron-3-Super \citep{nvidia2025nemotron3super}. Table~\ref{tab:large_scale_pivot} reports agentic benchmark accuracy during the Nemotron-3-Super post-training pipeline, where PivotRL covers the agentic environments while other RL environments handle reasoning and chat. 

\begin{table}[t]
\centering
\small
\begin{tabular}{lcc}
\toprule
\textbf{Benchmark} & \textbf{Nemotron-3-Super SFT} & \textbf{After PivotRL Stage} \\
\midrule
$\tau^2$-Bench         & 48.00 & 64.00 \\
SWE-Bench Verified     & 12.87 & 61.33 \\
Terminal-Bench 1.1 Core & 23.33 & 34.17 \\
BrowseComp             & 13.03 & 25.04 \\
\bottomrule
\end{tabular}
\caption{
Nemotron-3-Super agentic benchmark accuracy before and after the RL stage that included PivotRL environments for the agentic verticals and other RL environments for reasoning and chat. 
}
\label{tab:large_scale_pivot}
\end{table}

\section{Related Works}
\label{sec:related}

\subsection{Agentic LLMs and Training Recipes}
Agentic language models interleave natural language with grounded actions for complex environment interactions across tools, code, and web navigation \citep{yao2023react,schick2023toolformer,xu2023toolbench,qin2023toolllm,shridhar2020alfworld,wang2022scienceworld,jimenez2024swebench,wang2025openhandsopenplatformai,pan2025trainingsoftwareengineeringagents,yao2022webshop,wei2025browsecompsimplechallengingbenchmark,ma2024agentboard,wang2024sotopiapi}. 
Reinforcement learning (RL) further optimizes multi-turn exploration and credit assignment \citep{ouyang2022training,schulman2017ppo,schulman2015trpo,wang2025sparl,song2024trialerror,zhang2025rlvmr}, supported by recent advances in scalable policy optimization, hierarchical modeling, and multi-agent coordination \citep{ji2025treesearchllmrl,rohatgi2025tamingprocessverifiers,xue2025simpletir,dong2025toolstar,yang2025dcpo,yu2025dapoopensourcellmreinforcement,luo2025agentlightningtrainai,dong2025agenticreinforcedpolicyoptimization,su2025toolorchestra,zhou2024archer,prabhakar2025apigenmtagenticpipelinemultiturn,zhou2025sweetrl,xi2025agentgymrl}. 
In parallel, RL with verifiable rewards (RLVR) succeeds in reasoning domains \citep{shao2024deepseekmath,xin2024deepseekproverv15harnessingproofassistant,Guo_2025,5team2025glm45agenticreasoningcoding}, while on-policy distillation mitigates off-policy distribution shifts \citep{lu2025onpolicydistillation,zhao2026selfdistilledreasoneronpolicyselfdistillation,xu2025learninginteractionagenticdistillation,hubotter2026selfdistillation}. 
Our work lies at this intersection: we target agentic tasks by converting supervised trajectories into turn-level verifier rewards, enabling on-policy optimization without a separate reward model.

\subsection{Adapting Behavior Cloning into RL}
Offline supervised fine-tuning (SFT) frequently suffers from catastrophic forgetting and degraded generalization \citep{luo2023empiricalCF,li2024revisitingCF,imanov2026mechanisticCF,kirkpatrick2017ewc,li2016lwf}, aligning with the superficial alignment hypothesis \citep{zhou2023limaalignment,vergarabrowne2026sahcomplexity,raghavendra2024revisitingSAH}. Theoretically, offline imitation suboptimality grows quadratically with task horizon \citep{rajaraman2020towardfundamentallimitsimitation}, necessitating environment interaction via online data aggregation to overcome \citep{ross2011dagger,swamy2022minimaxoptimalonlineimitation}. 
Crucially, on-policy RL mitigates this forgetting by retaining broader capabilities \citep{shenfeld2025rlsrazoronlinereinforcement,chen2025retainingdoingroleonpolicy}. To transfer offline knowledge online, prior methods combine expert demonstrations with RL updates \citep{uchendu2022jumpstartreinforcementlearning,hester2018dqfd}, utilize partial trace prefixes \citep{setlur2026prefixrl}, or repurpose SFT tokens as rollout rewards \citep{ming2025onetokenrollout}. Building on this SFT-to-RL bridge, our approach specializes in multi-turn tool use via turn-based pivot selection and functional rewards.
\section{Conclusion}
\label{sec:conclusion}

In this work, we introduce PivotRL, a turn-level reinforcement learning algorithm designed to tractably post-train large language models for long-horizon agentic tasks. 
Our empirical evaluations across four diverse agentic domains demonstrate the effectiveness and scalability of this approach. Compared to standard SFT on identical data, PivotRL achieves a $+4.17$ higher in-domain accuracy on average, alongside a substantial $+10.04$ out-of-domain (OOD) retention advantage that heavily mitigates catastrophic forgetting. Furthermore, when compared to end-to-end RL, PivotRL achieves comparable accuracy at a fraction of the compute cost.
PivotRL's performance gain comes from better action-space coverage obtained by on-policy rollouts and verifier-derived rewards.
In future work, we plan to extend our framework to incorporate non-programmatic verifiers, such as LLM-as-a-judge frameworks~\citep{zheng2023judgingllmasajudgemtbenchchatbot} and process reward models~\citep{lightman2023letsverifystepstep}, as well as exploring online reward profiling approaches such as dynamic sampling~\citep{yu2025dapoopensourcellmreinforcement}.

\bibliography{references}

\begin{thebibliography}{73}
\providecommand{\natexlab}[1]{#1}
\providecommand{\url}[1]{\texttt{#1}}
\expandafter\ifx\csname urlstyle\endcsname\relax
  \providecommand{\doi}[1]{doi: #1}\else
  \providecommand{\doi}{doi: \begingroup \urlstyle{rm}\Url}\fi

\bibitem[Anomaly(2026)]{opencode}
Anomaly.
\newblock Opencode: The open source coding agent, 2026.
\newblock URL \url{https://github.com/anomalyco/opencode}.

\bibitem[Barres et~al.(2025)Barres, Dong, Ray, Si, and Narasimhan]{barres2025tau2bench}
Victor Barres, Honghua Dong, Soham Ray, Xujie Si, and Karthik Narasimhan.
\newblock $\tau^2$-bench: Evaluating conversational agents in a dual-control environment, 2025.
\newblock URL \url{https://arxiv.org/abs/2506.07982}.

\bibitem[Chen et~al.(2025)Chen, Razin, Narasimhan, and Chen]{chen2025retainingdoingroleonpolicy}
Howard Chen, Noam Razin, Karthik Narasimhan, and Danqi Chen.
\newblock Retaining by doing: The role of on-policy data in mitigating forgetting, 2025.
\newblock URL \url{https://arxiv.org/abs/2510.18874}.

\bibitem[Chu et~al.(2025)Chu, Zhai, Yang, Tong, Xie, Schuurmans, Le, Levine, and Ma]{chu2025sftmemorizesrlgeneralizes}
Tianzhe Chu, Yuexiang Zhai, Jihan Yang, Shengbang Tong, Saining Xie, Dale Schuurmans, Quoc~V. Le, Sergey Levine, and Yi~Ma.
\newblock Sft memorizes, rl generalizes: A comparative study of foundation model post-training, 2025.
\newblock URL \url{https://arxiv.org/abs/2501.17161}.

\bibitem[DeepSeek-AI(2025{\natexlab{a}})]{Guo_2025}
DeepSeek-AI.
\newblock Deepseek-r1 incentivizes reasoning in llms through reinforcement learning.
\newblock \emph{Nature}, 645\penalty0 (8081):\penalty0 633–638, September 2025{\natexlab{a}}.
\newblock ISSN 1476-4687.
\newblock \doi{10.1038/s41586-025-09422-z}.
\newblock URL \url{http://dx.doi.org/10.1038/s41586-025-09422-z}.

\bibitem[DeepSeek-AI(2025{\natexlab{b}})]{deepseekai2025deepseekv32pushingfrontieropen}
DeepSeek-AI.
\newblock Deepseek-v3.2: Pushing the frontier of open large language models, 2025{\natexlab{b}}.
\newblock URL \url{https://arxiv.org/abs/2512.02556}.

\bibitem[Dong et~al.(2025{\natexlab{a}})Dong, Chen, Li, Jin, Qian, Zhu, Mao, Zhou, Dou, and Wen]{dong2025toolstar}
Guanting Dong, Yifei Chen, Xiaoxi Li, Jiajie Jin, Hongjin Qian, Yutao Zhu, Hangyu Mao, Guorui Zhou, Zhicheng Dou, and Ji-Rong Wen.
\newblock Tool-star: Empowering {LLM}-brained multi-tool reasoner via reinforcement learning, 2025{\natexlab{a}}.
\newblock URL \url{https://arxiv.org/abs/2505.16410}.

\bibitem[Dong et~al.(2025{\natexlab{b}})Dong, Mao, Ma, Bao, Chen, Wang, Chen, Du, Wang, Zhang, Zhou, Zhu, Wen, and Dou]{dong2025agenticreinforcedpolicyoptimization}
Guanting Dong, Hangyu Mao, Kai Ma, Licheng Bao, Yifei Chen, Zhongyuan Wang, Zhongxia Chen, Jiazhen Du, Huiyang Wang, Fuzheng Zhang, Guorui Zhou, Yutao Zhu, Ji-Rong Wen, and Zhicheng Dou.
\newblock Agentic reinforced policy optimization, 2025{\natexlab{b}}.
\newblock URL \url{https://arxiv.org/abs/2507.19849}.

\bibitem[Hester et~al.(2018)Hester, Vecerik, Pietquin, Lanctot, Schaul, Piot, Horgan, Quan, Sendonaris, Osband, Dulac-Arnold, Agapiou, Leibo, and Gruslys]{hester2018dqfd}
Todd Hester, Matej Vecerik, Olivier Pietquin, Marc Lanctot, Tom Schaul, Bilal Piot, Dan Horgan, John Quan, Andrew Sendonaris, Ian Osband, Gabriel Dulac-Arnold, John Agapiou, Joel~Z. Leibo, and Audrunas Gruslys.
\newblock Deep q-learning from demonstrations.
\newblock In \emph{Proceedings of the Thirty-Second AAAI Conference on Artificial Intelligence and Thirtieth Innovative Applications of Artificial Intelligence Conference and Eighth AAAI Symposium on Educational Advances in Artificial Intelligence}, AAAI'18/IAAI'18/EAAI'18. AAAI Press, 2018.
\newblock ISBN 978-1-57735-800-8.

\bibitem[H{\"u}botter et~al.(2026)H{\"u}botter, L{\"u}beck, Behric, Baumann, Bagatella, Marta, Hakimi, Shenfeld, Buening, Guestrin, and Krause]{hubotter2026selfdistillation}
Jonas H{\"u}botter, Frederike L{\"u}beck, Lejs Behric, Anton Baumann, Marco Bagatella, Daniel Marta, Ido Hakimi, Idan Shenfeld, Thomas~Kleine Buening, Carlos Guestrin, and Andreas Krause.
\newblock Reinforcement learning via self-distillation, 2026.
\newblock URL \url{https://arxiv.org/abs/2601.20802}.

\bibitem[Imanov(2026)]{imanov2026mechanisticCF}
Olaf Yunus~Laitinen Imanov.
\newblock Mechanistic analysis of catastrophic forgetting in large language models during continual fine-tuning, 2026.
\newblock URL \url{https://arxiv.org/abs/2601.18699}.

\bibitem[Jain et~al.(2025)Jain, Singh, Shetty, Zheng, Sen, and Stoica]{jain2025r2egymproceduralenvironmentshybrid}
Naman Jain, Jaskirat Singh, Manish Shetty, Liang Zheng, Koushik Sen, and Ion Stoica.
\newblock R2e-gym: Procedural environments and hybrid verifiers for scaling open-weights swe agents, 2025.
\newblock URL \url{https://arxiv.org/abs/2504.07164}.

\bibitem[Ji et~al.(2025)Ji, Ma, Wang, Chen, Chu, and Wu]{ji2025treesearchllmrl}
Yuxiang Ji, Ziyu Ma, Yong Wang, Guanhua Chen, Xiangxiang Chu, and Liaoni Wu.
\newblock Tree search for {LLM} agent reinforcement learning, 2025.
\newblock URL \url{https://arxiv.org/abs/2509.21240}.

\bibitem[Jimenez et~al.(2024)Jimenez, Yang, Wettig, Yao, Pei, Press, and Narasimhan]{jimenez2024swebench}
Carlos~E Jimenez, John Yang, Alexander Wettig, Shunyu Yao, Kexin Pei, Ofir Press, and Karthik~R Narasimhan.
\newblock {SWE}-bench: Can language models resolve real-world github issues?
\newblock In \emph{The Twelfth International Conference on Learning Representations}, 2024.
\newblock URL \url{https://openreview.net/forum?id=VTF8yNQM66}.

\bibitem[Kirkpatrick et~al.(2017)Kirkpatrick, Pascanu, Rabinowitz, Veness, Desjardins, Rusu, Milan, Quan, Ramalho, Grabska-Barwinska, Hassabis, Clopath, Kumaran, and Hadsell]{kirkpatrick2017ewc}
James Kirkpatrick, Razvan Pascanu, Neil Rabinowitz, Joel Veness, Guillaume Desjardins, Andrei~A. Rusu, Kieran Milan, John Quan, Tiago Ramalho, Agnieszka Grabska-Barwinska, Demis Hassabis, Claudia Clopath, Dharshan Kumaran, and Raia Hadsell.
\newblock Overcoming catastrophic forgetting in neural networks.
\newblock \emph{Proceedings of the National Academy of Sciences}, 114\penalty0 (13):\penalty0 3521--3526, 2017.
\newblock \doi{10.1073/pnas.1611835114}.
\newblock URL \url{https://www.pnas.org/doi/abs/10.1073/pnas.1611835114}.

\bibitem[Li et~al.(2024)Li, Ding, Fang, and Tao]{li2024revisitingCF}
Hongyu Li, Liang Ding, Meng Fang, and Dacheng Tao.
\newblock Revisiting catastrophic forgetting in large language model tuning.
\newblock \emph{arXiv preprint arXiv:2406.04836}, 2024.
\newblock \doi{10.48550/arXiv.2406.04836}.
\newblock URL \url{https://arxiv.org/abs/2406.04836}.

\bibitem[Li \& Hoiem(2016)Li and Hoiem]{li2016lwf}
Zhizhong Li and Derek Hoiem.
\newblock Learning without forgetting.
\newblock In \emph{Proceedings of the European Conference on Computer Vision (ECCV)}, 2016.
\newblock URL \url{https://arxiv.org/abs/1606.09282}.

\bibitem[Lightman et~al.(2023)Lightman, Kosaraju, Burda, Edwards, Baker, Lee, Leike, Schulman, Sutskever, and Cobbe]{lightman2023letsverifystepstep}
Hunter Lightman, Vineet Kosaraju, Yura Burda, Harri Edwards, Bowen Baker, Teddy Lee, Jan Leike, John Schulman, Ilya Sutskever, and Karl Cobbe.
\newblock Let's verify step by step, 2023.
\newblock URL \url{https://arxiv.org/abs/2305.20050}.

\bibitem[Liu et~al.(2025{\natexlab{a}})Liu, Luo, Mao, Chen, Li, Vu, and Haffari]{xu2025learninginteractionagenticdistillation}
Junnan Liu, Linhao Luo, Qianren Mao, Zhijun Chen, Zhuoran Li, Thuy-Trang Vu, and Gholamreza Haffari.
\newblock Learning with interaction: Agentic distillation for large language model reasoning.
\newblock In \emph{International Conference on Learning Representations}, 2025{\natexlab{a}}.
\newblock URL \url{https://openreview.net/forum?id=zyp9QT5Gf1}.

\bibitem[Liu et~al.(2025{\natexlab{b}})Liu, Huang, Zeng, Hao, Yu, Li, Wang, Gan, Liu, Yu, Wang, Wang, Ning, Hou, Wang, Wu, Wang, Liu, Wang, Tang, Tu, Shang, Jiang, Tang, Lian, Liu, and Chen]{liu2025toolacewinningpointsllm}
Weiwen Liu, Xu~Huang, Xingshan Zeng, Xinlong Hao, Shuai Yu, Dexun Li, Shuai Wang, Weinan Gan, Zhengying Liu, Yuanqing Yu, Zezhong Wang, Yuxian Wang, Wu~Ning, Yutai Hou, Bin Wang, Chuhan Wu, Xinzhi Wang, Yong Liu, Yasheng Wang, Duyu Tang, Dandan Tu, Lifeng Shang, Xin Jiang, Ruiming Tang, Defu Lian, Qun Liu, and Enhong Chen.
\newblock Toolace: Winning the points of llm function calling, 2025{\natexlab{b}}.
\newblock URL \url{https://arxiv.org/abs/2409.00920}.

\bibitem[Lu \& Lab(2025)Lu and Lab]{lu2025onpolicydistillation}
Kevin Lu and Thinking~Machines Lab.
\newblock On-policy distillation.
\newblock \emph{Thinking Machines Lab: Connectionism}, 2025.
\newblock URL \url{https://thinkingmachines.ai/blog/on-policy-distillation}.

\bibitem[Luo et~al.(2025{\natexlab{a}})Luo, Zhang, He, Wang, Zhao, Li, Qiu, and Yang]{luo2025agentlightningtrainai}
Xufang Luo, Yuge Zhang, Zhiyuan He, Zilong Wang, Siyun Zhao, Dongsheng Li, Luna~K. Qiu, and Yuqing Yang.
\newblock Agent lightning: Train any ai agents with reinforcement learning, 2025{\natexlab{a}}.
\newblock URL \url{https://arxiv.org/abs/2508.03680}.

\bibitem[Luo et~al.(2025{\natexlab{b}})Luo, Yang, Meng, Li, Zhou, and Zhang]{luo2023empiricalCF}
Yun Luo, Zhen Yang, Fandong Meng, Yafu Li, Jie Zhou, and Yue Zhang.
\newblock An empirical study of catastrophic forgetting in large language models during continual fine-tuning, 2025{\natexlab{b}}.
\newblock URL \url{https://arxiv.org/abs/2308.08747}.

\bibitem[Ma et~al.(2024)Ma, Zhang, Zhu, Yang, Yang, Jin, Lan, Kong, and He]{ma2024agentboard}
Chang Ma, Junlei Zhang, Zhihao Zhu, Cheng Yang, Yujiu Yang, Yaohui Jin, Zhenzhong Lan, Lingpeng Kong, and Junxian He.
\newblock Agentboard: An analytical evaluation board of multi-turn {LLM} agents, 2024.
\newblock URL \url{https://arxiv.org/abs/2401.13178}.

\bibitem[Ming et~al.(2026)Ming, Wu, Hu, He, and Yu]{ming2025onetokenrollout}
Rui Ming, Haoyuan Wu, Shoubo Hu, Zhuolun He, and Bei Yu.
\newblock One-token rollout: Guiding supervised fine-tuning of llms with policy gradient, 2026.
\newblock URL \url{https://arxiv.org/abs/2509.26313}.

\bibitem[{MiniMax}(2026)]{minimax2026m25}
{MiniMax}.
\newblock Minimax-m2.5, 2026.
\newblock URL \url{https://huggingface.co/MiniMaxAI/MiniMax-M2.5}.

\bibitem[NVIDIA(2025{\natexlab{a}})]{nemo-gym}
NVIDIA.
\newblock Nemo gym: An open source library for scaling reinforcement learning environments for llm.
\newblock \url{https://github.com/NVIDIA-NeMo/Gym}, 2025{\natexlab{a}}.
\newblock GitHub repository.

\bibitem[NVIDIA(2025{\natexlab{b}})]{nemo-rl}
NVIDIA.
\newblock Nemo rl: A scalable and efficient post-training library.
\newblock \url{https://github.com/NVIDIA-NeMo/RL}, 2025{\natexlab{b}}.
\newblock GitHub repository.

\bibitem[{OpenAI}(2026)]{openaicodex_software}
{OpenAI}.
\newblock Openai codex: The coding agent for software development, 2026.
\newblock URL \url{https://developers.openai.com/codex}.

\bibitem[Ouyang et~al.(2022)Ouyang, Wu, Jiang, Almeida, Wainwright, Mishkin, Zhang, Agarwal, Slama, Ray, Schulman, Hilton, Kelton, Miller, Simens, Askell, Welinder, Christiano, Leike, and Lowe]{ouyang2022training}
Long Ouyang, Jeff Wu, Xu~Jiang, Diogo Almeida, Carroll~L. Wainwright, Pamela Mishkin, Chong Zhang, Sandhini Agarwal, Katarina Slama, Alex Ray, John Schulman, Jacob Hilton, Fraser Kelton, Luke Miller, Maddie Simens, Amanda Askell, Peter Welinder, Paul Christiano, Jan Leike, and Ryan Lowe.
\newblock Training language models to follow instructions with human feedback, 2022.
\newblock URL \url{https://arxiv.org/abs/2203.02155}.

\bibitem[Pan et~al.(2025)Pan, Wang, Neubig, Jaitly, Ji, Suhr, and Zhang]{pan2025trainingsoftwareengineeringagents}
Jiayi Pan, Xingyao Wang, Graham Neubig, Navdeep Jaitly, Heng Ji, Alane Suhr, and Yizhe Zhang.
\newblock Training software engineering agents and verifiers with swe-gym, 2025.
\newblock URL \url{https://arxiv.org/abs/2412.21139}.

\bibitem[Prabhakar et~al.(2025)Prabhakar, Liu, Zhu, Zhang, Awalgaonkar, Wang, Liu, Chen, Hoang, Niebles, Heinecke, Yao, Wang, Savarese, and Xiong]{prabhakar2025apigenmtagenticpipelinemultiturn}
Akshara Prabhakar, Zuxin Liu, Ming Zhu, Jianguo Zhang, Tulika Awalgaonkar, Shiyu Wang, Zhiwei Liu, Haolin Chen, Thai Hoang, Juan~Carlos Niebles, Shelby Heinecke, Weiran Yao, Huan Wang, Silvio Savarese, and Caiming Xiong.
\newblock Apigen-mt: Agentic pipeline for multi-turn data generation via simulated agent-human interplay, 2025.
\newblock URL \url{https://arxiv.org/abs/2504.03601}.

\bibitem[Qin et~al.(2023)Qin, Liang, Ye, Zhu, Yan, Lu, Lin, Cong, Tang, Qian, Zhao, Hong, Tian, Xie, Zhou, Gerstein, Li, Liu, and Sun]{qin2023toolllm}
Yujia Qin, Shihao Liang, Yining Ye, Kunlun Zhu, Lan Yan, Yaxi Lu, Yankai Lin, Xin Cong, Xiangru Tang, Bill Qian, Sihan Zhao, Lauren Hong, Runchu Tian, Ruobing Xie, Jie Zhou, Mark Gerstein, Dahai Li, Zhiyuan Liu, and Maosong Sun.
\newblock Toolllm: Facilitating large language models to master 16000+ real-world apis, 2023.
\newblock URL \url{https://arxiv.org/abs/2307.16789}.

\bibitem[Raghavendra et~al.(2024)Raghavendra, Nath, and Hendryx]{raghavendra2024revisitingSAH}
Mohit Raghavendra, Vaskar Nath, and Sean Hendryx.
\newblock Revisiting the superficial alignment hypothesis, 2024.
\newblock URL \url{https://arxiv.org/abs/2410.03717}.

\bibitem[Rajaraman et~al.(2020)Rajaraman, Yang, Jiao, and Ramchandran]{rajaraman2020towardfundamentallimitsimitation}
Nived Rajaraman, Lin~F. Yang, Jiantao Jiao, and Kannan Ramchandran.
\newblock Toward the fundamental limits of imitation learning.
\newblock \emph{Advances in Neural Information Processing Systems}, 33:\penalty0 2914--2924, 2020.

\bibitem[Rohatgi et~al.(2025)Rohatgi, Shetty, Saless, Li, Moitra, Risteski, and Foster]{rohatgi2025tamingprocessverifiers}
Dhruv Rohatgi, Abhishek Shetty, Donya Saless, Yuchen Li, Ankur Moitra, Andrej Risteski, and Dylan~J. Foster.
\newblock Taming imperfect process verifiers: A sampling perspective on backtracking, 2025.
\newblock URL \url{https://arxiv.org/abs/2510.03149}.

\bibitem[Ross et~al.(2011)Ross, Gordon, and Bagnell]{ross2011dagger}
St{\'e}phane Ross, Geoffrey~J. Gordon, and J.~Andrew Bagnell.
\newblock A reduction of imitation learning and structured prediction to no-regret online learning.
\newblock In \emph{Proceedings of the Fourteenth International Conference on Artificial Intelligence and Statistics (AISTATS)}, 2011.
\newblock URL \url{http://proceedings.mlr.press/v15/ross11a.html}.

\bibitem[Schick et~al.(2023)Schick, Dwivedi-Yu, Dessì, Raileanu, Lomeli, Zettlemoyer, Cancedda, and Scialom]{schick2023toolformer}
Timo Schick, Jane Dwivedi-Yu, Roberto Dessì, Roberta Raileanu, Maria Lomeli, Luke Zettlemoyer, Nicola Cancedda, and Thomas Scialom.
\newblock Toolformer: Language models can teach themselves to use tools, 2023.
\newblock URL \url{https://arxiv.org/abs/2302.04761}.

\bibitem[Schulman et~al.(2015)Schulman, Levine, Abbeel, Jordan, and Moritz]{schulman2015trpo}
John Schulman, Sergey Levine, Pieter Abbeel, Michael Jordan, and Philipp Moritz.
\newblock Trust region policy optimization.
\newblock In Francis Bach and David Blei (eds.), \emph{Proceedings of the 32nd International Conference on Machine Learning}, volume~37 of \emph{Proceedings of Machine Learning Research}, pp.\  1889--1897, Lille, France, 07--09 Jul 2015. PMLR.
\newblock URL \url{https://proceedings.mlr.press/v37/schulman15.html}.

\bibitem[Schulman et~al.(2017)Schulman, Wolski, Dhariwal, Radford, and Klimov]{schulman2017ppo}
John Schulman, Filip Wolski, Prafulla Dhariwal, Alec Radford, and Oleg Klimov.
\newblock Proximal policy optimization algorithms, 2017.
\newblock URL \url{https://arxiv.org/abs/1707.06347}.

\bibitem[Setlur et~al.(2026)Setlur, Wang, Cohen, Rashidinejad, and Xie]{setlur2026prefixrl}
Amrith Setlur, Zijian Wang, Andrew Cohen, Paria Rashidinejad, and Sang~Michael Xie.
\newblock Reuse your flops: Scaling rl on hard problems by conditioning on very off-policy prefixes, 2026.
\newblock URL \url{https://arxiv.org/abs/2601.18795}.

\bibitem[Shao et~al.(2024)Shao, Wang, Zhu, Xu, Song, Bi, Zhang, Zhang, Li, Wu, and Guo]{shao2024deepseekmath}
Zhihong Shao, Peiyi Wang, Qihao Zhu, Runxin Xu, Junxiao Song, Xiao Bi, Haowei Zhang, Mingchuan Zhang, Y.K. Li, Y.~Wu, and Daya Guo.
\newblock Deepseekmath: Pushing the limits of mathematical reasoning in open language models.
\newblock \emph{arXiv preprint arXiv:2402.03300}, 2024.

\bibitem[Shenfeld et~al.(2025)Shenfeld, Pari, and Agrawal]{shenfeld2025rlsrazoronlinereinforcement}
Idan Shenfeld, Jyothish Pari, and Pulkit Agrawal.
\newblock Rl's razor: Why online reinforcement learning forgets less, 2025.
\newblock URL \url{https://arxiv.org/abs/2509.04259}.

\bibitem[Shridhar et~al.(2021)Shridhar, Yuan, Côté, Bisk, Trischler, and Hausknecht]{shridhar2020alfworld}
Mohit Shridhar, Xingdi Yuan, Marc-Alexandre Côté, Yonatan Bisk, Adam Trischler, and Matthew Hausknecht.
\newblock Alfworld: Aligning text and embodied environments for interactive learning, 2021.
\newblock URL \url{https://arxiv.org/abs/2010.03768}.

\bibitem[Song et~al.(2024)Song, Yin, Yue, Huang, Li, and Lin]{song2024trialerror}
Yifan Song, Da~Yin, Xiang Yue, Jie Huang, Sujian Li, and Bill~Yuchen Lin.
\newblock Trial and error: Exploration-based trajectory optimization for {LLM} agents, 2024.
\newblock URL \url{https://arxiv.org/abs/2403.02502}.

\bibitem[Su et~al.(2025)Su, Diao, Lu, Liu, Xu, Dong, Fu, Belcak, Ye, Yin, Dong, Bakhturina, Yu, Choi, Kautz, and Molchanov]{su2025toolorchestra}
Hongjin Su, Shizhe Diao, Ximing Lu, Mingjie Liu, Jiacheng Xu, Xin Dong, Yonggan Fu, Peter Belcak, Hanrong Ye, Hongxu Yin, Yi~Dong, Evelina Bakhturina, Tao Yu, Yejin Choi, Jan Kautz, and Pavlo Molchanov.
\newblock Toolorchestra: Elevating intelligence via efficient model and tool orchestration, 2025.
\newblock URL \url{https://arxiv.org/abs/2511.21689}.

\bibitem[Swamy et~al.(2023)Swamy, Rajaraman, Peng, Choudhury, Bagnell, Wu, Jiao, and Ramchandran]{swamy2022minimaxoptimalonlineimitation}
Gokul Swamy, Nived Rajaraman, Matthew Peng, Sanjiban Choudhury, J.~Andrew Bagnell, Zhiwei~Steven Wu, Jiantao Jiao, and Kannan Ramchandran.
\newblock Minimax optimal online imitation learning via replay estimation, 2023.
\newblock URL \url{https://arxiv.org/abs/2205.15397}.

\bibitem[Team(2025{\natexlab{a}})]{5team2025glm45agenticreasoningcoding}
GLM-4.5 Team.
\newblock Glm-4.5: Agentic, reasoning, and coding (arc) foundation models, 2025{\natexlab{a}}.
\newblock URL \url{https://arxiv.org/abs/2508.06471}.

\bibitem[Team(2025{\natexlab{b}})]{kimiteam2025kimik2openagentic}
Kimi Team.
\newblock Kimi k2: Open agentic intelligence, 2025{\natexlab{b}}.
\newblock URL \url{https://arxiv.org/abs/2507.20534}.

\bibitem[Team(2026)]{nvidia2025nemotron3super}
NVIDIA~Nemotron Team.
\newblock Nvidia nemotron-3 super technical report.
\newblock \url{https://research.nvidia.com/labs/nemotron/files/NVIDIA-Nemotron-3-Super-Technical-Report.pdf}, 2026.
\newblock NVIDIA Research Technical Report.

\bibitem[Team(2025{\natexlab{c}})]{tbench_2025}
The Terminal-Bench Team.
\newblock Terminal-bench: A benchmark for ai agents in terminal environments, Apr 2025{\natexlab{c}}.
\newblock URL \url{https://github.com/laude-institute/terminal-bench}.

\bibitem[Uchendu et~al.(2023)Uchendu, Xiao, Lu, Zhu, Yan, Simon, Bennice, Fu, Ma, Jiao, Levine, and Hausman]{uchendu2022jumpstartreinforcementlearning}
Ikechukwu Uchendu, Ted Xiao, Yao Lu, Banghua Zhu, Mengyuan Yan, Jos{\'e}phine Simon, Matthew Bennice, Chuyuan Fu, Cong Ma, Jiantao Jiao, Sergey Levine, and Karol Hausman.
\newblock Jump-start reinforcement learning.
\newblock In \emph{International Conference on Machine Learning}, 2023.

\bibitem[Vergara-Browne et~al.(2026)Vergara-Browne, Patil, Titov, Reddy, Pimentel, and Mosbach]{vergarabrowne2026sahcomplexity}
Tomás Vergara-Browne, Darshan Patil, Ivan Titov, Siva Reddy, Tiago Pimentel, and Marius Mosbach.
\newblock Operationalising the superficial alignment hypothesis via task complexity, 2026.
\newblock URL \url{https://arxiv.org/abs/2602.15829}.

\bibitem[Wang et~al.(2025{\natexlab{a}})Wang, Leong, Wang, Wang, and Li]{wang2025sparl}
Hanlin Wang, Chak~Tou Leong, Jiashuo Wang, Jian Wang, and Wenjie Li.
\newblock Spa-rl: Reinforcing {LLM} agents via stepwise progress attribution, 2025{\natexlab{a}}.
\newblock URL \url{https://arxiv.org/abs/2505.20732}.

\bibitem[Wang et~al.(2024)Wang, Yu, Zhang, Qi, Sap, Neubig, Bisk, and Zhu]{wang2024sotopiapi}
Ruiyi Wang, Haofei Yu, Wenxin Zhang, Zhengyang Qi, Maarten Sap, Graham Neubig, Yonatan Bisk, and Hao Zhu.
\newblock Sotopia-$\pi$: Interactive learning of socially intelligent language agents, 2024.
\newblock URL \url{https://arxiv.org/abs/2403.08715}.

\bibitem[Wang et~al.(2022)Wang, Jansen, C{\^o}t{\'e}, and Ammanabrolu]{wang2022scienceworld}
Ruoyao Wang, Peter Jansen, Marc-Alexandre C{\^o}t{\'e}, and Prithviraj Ammanabrolu.
\newblock Scienceworld: Is your agent smarter than a 5th grader?, 2022.
\newblock URL \url{https://arxiv.org/abs/2203.07540}.

\bibitem[Wang et~al.(2025{\natexlab{b}})Wang, Li, Song, Xu, Tang, Zhuge, Pan, Song, Li, Singh, Tran, Li, Ma, Zheng, Qian, Shao, Muennighoff, Zhang, Hui, Lin, Brennan, Peng, Ji, and Neubig]{wang2025openhandsopenplatformai}
Xingyao Wang, Boxuan Li, Yufan Song, Frank~F. Xu, Xiangru Tang, Mingchen Zhuge, Jiayi Pan, Yueqi Song, Bowen Li, Jaskirat Singh, Hoang~H. Tran, Fuqiang Li, Ren Ma, Mingzhang Zheng, Bill Qian, Yanjun Shao, Niklas Muennighoff, Yizhe Zhang, Binyuan Hui, Junyang Lin, Robert Brennan, Hao Peng, Heng Ji, and Graham Neubig.
\newblock Openhands: An open platform for ai software developers as generalist agents, 2025{\natexlab{b}}.
\newblock URL \url{https://arxiv.org/abs/2407.16741}.

\bibitem[Wei et~al.(2025)Wei, Sun, Papay, McKinney, Han, Fulford, Chung, Passos, Fedus, and Glaese]{wei2025browsecompsimplechallengingbenchmark}
Jason Wei, Zhiqing Sun, Spencer Papay, Scott McKinney, Jeffrey Han, Isa Fulford, Hyung~Won Chung, Alex~Tachard Passos, William Fedus, and Amelia Glaese.
\newblock Browsecomp: A simple yet challenging benchmark for browsing agents, 2025.
\newblock URL \url{https://arxiv.org/abs/2504.12516}.

\bibitem[Xi et~al.(2025)Xi, Huang, Liao, Huang, Guo, Liu, Zheng, Ye, Zhang, Chen, He, Ding, Li, Chen, Du, Yao, Xu, Chen, Gui, Wu, Zhang, Huang, and Jiang]{xi2025agentgymrl}
Zhiheng Xi, Jixuan Huang, Chenyang Liao, Baodai Huang, Honglin Guo, Jiaqi Liu, Rui Zheng, Junjie Ye, Jiazheng Zhang, Wenxiang Chen, Wei He, Yiwen Ding, Guanyu Li, Zehui Chen, Zhengyin Du, Xuesong Yao, Yufei Xu, Jiecao Chen, Tao Gui, Zuxuan Wu, Qi~Zhang, Xuanjing Huang, and Yu-Gang Jiang.
\newblock Agentgym-rl: Training {LLM} agents for long-horizon decision making through multi-turn reinforcement learning, 2025.
\newblock URL \url{https://arxiv.org/abs/2509.08755}.

\bibitem[Xie et~al.(2024)Xie, Zhang, Chen, Li, Zhao, Cao, Hua, Cheng, Shin, Lei, Liu, Xu, Zhou, Savarese, Xiong, Zhong, and Yu]{xie2024osworld}
Tianbao Xie, Danyang Zhang, Jixuan Chen, Xiaochuan Li, Siheng Zhao, Ruisheng Cao, Toh~Jing Hua, Zhoujun Cheng, Dongchan Shin, Fangyu Lei, Yitao Liu, Yiheng Xu, Shuyan Zhou, Silvio Savarese, Caiming Xiong, Victor Zhong, and Tao Yu.
\newblock Osworld: Benchmarking multimodal agents for open-ended tasks in real computer environments, 2024.
\newblock URL \url{https://arxiv.org/abs/2404.07972}.

\bibitem[Xin et~al.(2024)Xin, Ren, Song, Shao, Zhao, Wang, Liu, Zhang, Lu, Du, Gao, Zhu, Yang, Gou, Wu, Luo, and Ruan]{xin2024deepseekproverv15harnessingproofassistant}
Huajian Xin, Z.~Z. Ren, Junxiao Song, Zhihong Shao, Wanjia Zhao, Haocheng Wang, Bo~Liu, Liyue Zhang, Xuan Lu, Qiushi Du, Wenjun Gao, Qihao Zhu, Dejian Yang, Zhibin Gou, Z.~F. Wu, Fuli Luo, and Chong Ruan.
\newblock Deepseek-prover-v1.5: Harnessing proof assistant feedback for reinforcement learning and monte-carlo tree search, 2024.
\newblock URL \url{https://arxiv.org/abs/2408.08152}.

\bibitem[Xu et~al.(2023)Xu, Hong, Li, Hu, Chen, and Zhang]{xu2023toolbench}
Qiantong Xu, Fenglu Hong, Bo~Li, Changran Hu, Zhengyu Chen, and Jian Zhang.
\newblock On the tool manipulation capability of open-source large language models, 2023.
\newblock URL \url{https://arxiv.org/abs/2305.16504}.

\bibitem[Xue et~al.(2025)Xue, Zheng, Liu, Li, Zheng, Ma, and An]{xue2025simpletir}
Zhenghai Xue, Longtao Zheng, Qian Liu, Yingru Li, Xiaosen Zheng, Zejun Ma, and Bo~An.
\newblock Simpletir: End-to-end reinforcement learning for multi-turn tool-integrated reasoning, 2025.
\newblock URL \url{https://arxiv.org/abs/2509.02479}.

\bibitem[Yang et~al.(2025)Yang, Dou, Guo, Lu, Ju, Deng, and Xin]{yang2025dcpo}
Shihui Yang, Chengfeng Dou, Peidong Guo, Kai Lu, Qiang Ju, Fei Deng, and Rihui Xin.
\newblock Dcpo: Dynamic clipping policy optimization, 2025.
\newblock URL \url{https://arxiv.org/abs/2509.02333}.

\bibitem[Yao et~al.(2022)Yao, Chen, Yang, and Narasimhan]{yao2022webshop}
Shunyu Yao, Howard Chen, John Yang, and Karthik Narasimhan.
\newblock Webshop: Towards scalable real-world web interaction with grounded language agents, 2022.
\newblock URL \url{https://arxiv.org/abs/2207.01206}.

\bibitem[Yao et~al.(2023)Yao, Zhao, Yu, Du, Shafran, Narasimhan, and Cao]{yao2023react}
Shunyu Yao, Jeffrey Zhao, Dian Yu, Nan Du, Izhak Shafran, Karthik Narasimhan, and Yuan Cao.
\newblock React: Synergizing reasoning and acting in language models, 2023.
\newblock URL \url{https://arxiv.org/abs/2210.03629}.

\bibitem[Yu et~al.(2025)Yu, Zhang, Zhu, Yuan, Zuo, Yue, Dai, Fan, Liu, Liu, Liu, Lin, Lin, Ma, Sheng, Tong, Zhang, Zhang, Zhang, Zhu, Zhu, Chen, Chen, Wang, Yu, Song, Wei, Zhou, Liu, Ma, Zhang, Yan, Qiao, Wu, and Wang]{yu2025dapoopensourcellmreinforcement}
Qiying Yu, Zheng Zhang, Ruofei Zhu, Yufeng Yuan, Xiaochen Zuo, Yu~Yue, Weinan Dai, Tiantian Fan, Gaohong Liu, Lingjun Liu, Xin Liu, Haibin Lin, Zhiqi Lin, Bole Ma, Guangming Sheng, Yuxuan Tong, Chi Zhang, Mofan Zhang, Wang Zhang, Hang Zhu, Jinhua Zhu, Jiaze Chen, Jiangjie Chen, Chengyi Wang, Hongli Yu, Yuxuan Song, Xiangpeng Wei, Hao Zhou, Jingjing Liu, Wei-Ying Ma, Ya-Qin Zhang, Lin Yan, Mu~Qiao, Yonghui Wu, and Mingxuan Wang.
\newblock Dapo: An open-source llm reinforcement learning system at scale, 2025.
\newblock URL \url{https://arxiv.org/abs/2503.14476}.

\bibitem[Zhang et~al.(2025)Zhang, Chen, Li, Tu, and Li]{zhang2025rlvmr}
Zijing Zhang, Ziyang Chen, Mingxiao Li, Zhaopeng Tu, and Xiaolong Li.
\newblock Rlvmr: Reinforcement learning with verifiable meta-reasoning rewards for robust long-horizon agents, 2025.
\newblock URL \url{https://arxiv.org/abs/2507.22844}.

\bibitem[Zhao et~al.(2026)Zhao, Xie, Liu, Huang, Pang, Chen, and Grover]{zhao2026selfdistilledreasoneronpolicyselfdistillation}
Siyan Zhao, Zhihui Xie, Mengchen Liu, Jing Huang, Guan Pang, Feiyu Chen, and Aditya Grover.
\newblock Self-distilled reasoner: On-policy self-distillation for large language models, 2026.

\bibitem[Zheng et~al.(2023)Zheng, Chiang, Sheng, Zhuang, Wu, Zhuang, Lin, Li, Li, Xing, Zhang, Gonzalez, and Stoica]{zheng2023judgingllmasajudgemtbenchchatbot}
Lianmin Zheng, Wei-Lin Chiang, Ying Sheng, Siyuan Zhuang, Zhanghao Wu, Yonghao Zhuang, Zi~Lin, Zhuohan Li, Dacheng Li, Eric~P. Xing, Hao Zhang, Joseph~E. Gonzalez, and Ion Stoica.
\newblock Judging llm-as-a-judge with mt-bench and chatbot arena, 2023.
\newblock URL \url{https://arxiv.org/abs/2306.05685}.

\bibitem[Zhou et~al.(2023)Zhou, Liu, Xu, Iyer, Sun, Mao, Ma, Efrat, Yu, Yu, Zhang, Ghosh, Lewis, Zettlemoyer, and Levy]{zhou2023limaalignment}
Chunting Zhou, Pengfei Liu, Puxin Xu, Srini Iyer, Jiao Sun, Yuning Mao, Xuezhe Ma, Avia Efrat, Ping Yu, Lili Yu, Susan Zhang, Gargi Ghosh, Mike Lewis, Luke Zettlemoyer, and Omer Levy.
\newblock Lima: Less is more for alignment, 2023.
\newblock URL \url{https://arxiv.org/abs/2305.11206}.

\bibitem[Zhou et~al.(2024)Zhou, Zanette, Pan, Levine, and Kumar]{zhou2024archer}
Yifei Zhou, Andrea Zanette, Jiayi Pan, Sergey Levine, and Aviral Kumar.
\newblock Archer: Training language model agents via hierarchical multi-turn rl, 2024.
\newblock URL \url{https://arxiv.org/abs/2402.19446}.

\bibitem[Zhou et~al.(2025)Zhou, Jiang, Tian, Weston, Levine, Sukhbaatar, and Li]{zhou2025sweetrl}
Yifei Zhou, Song Jiang, Yuandong Tian, Jason Weston, Sergey Levine, Sainbayar Sukhbaatar, and Xian Li.
\newblock Sweet-rl: Training multi-turn {LLM} agents on collaborative reasoning tasks, 2025.
\newblock URL \url{https://arxiv.org/abs/2503.15478}.

\end{thebibliography}
\bibliographystyle{references}

\newpage
\appendix
\onecolumn

\section{Experiment Details}
\label{app:experiment_details}

This appendix provides domain-specific training procedures, terminology conventions, and detailed ablation analyses for the experiments in Section~\ref{sec:experiments}.

\subsection{Terminology Conventions}
\label{app:terminology}

Throughout the experiments, we call every extracted assistant turn a \emph{pivot candidate} and reserve the term \emph{pivot} for the offline-filtered subset used for PivotRL training. An \emph{action} is the full assistant completion at a model-call boundary, not an individual token. Depending on the benchmark, that completion may be a conversational tool-use turn, a coding-agent tool call, a bash command, or a search/browsing step. Natural-language-only assistant turns are also actions when the benchmark includes them.

In the single-domain experiments, OOD change denotes the score difference relative to the base model on benchmarks outside the training domain. 

For pivot selection, we use two subsets: \emph{random} ($\mathcal{D}_{\mathrm{cand}}$), which uses all pivot candidates, and \emph{low-reward-mean} ($\mathcal{D}_{\mathrm{adv}}$), which requires both nonzero reward variance and a gap between the demonstrated action and the reference-policy mean.

\subsection{Domain-Specific Training Details}
\label{app:training_details}

In all four domains, pivot candidates are extracted once from the demonstration traces, profiled offline, and filtered before RL training; training then samples only from the retained set. We detail the domain-specific action definitions, data construction, and local verifiers below.

\paragraph{$\tau^2$-Bench.}
We curate an SFT trajectory dataset of 281{,}774 trajectories across 838 domains using a synthetic data-generation pipeline similar to ToolAce~\citep{liu2025toolacewinningpointsllm}, Kimi-K2~\citep{kimiteam2025kimik2openagentic}, and DeepSeek-V3.2~\citep{deepseekai2025deepseekv32pushingfrontieropen}. We train on top of \texttt{Qwen3-30B-A3B-Thinking-2507}. Every assistant turn in the trace is first treated as a pivot candidate. In this domain, an action is the full assistant turn at the model-call boundary, which may contain natural language, tool calls, or both. We then evaluate random and low-reward-mean pivot sets as described in Section~\ref{sec:pivotrl_method}. We release the conversational tool use environment and data as part of \href{https://github.com/NVIDIA-NeMo/Gym/tree/main/resources_servers/single_step_tool_use_with_argument_comparison}{Nemo-Gym} and \href{https://huggingface.co/datasets/nvidia/Nemotron-RL-Agentic-Conversational-Tool-Use-Pivot-v1}{Nemotron-Post-Training-v3}.

\paragraph{Terminal-Bench.}
We collect resolved trajectories from \texttt{Qwen/Qwen3-Coder-480B-A35B-Instruct} and \texttt{moonshotai/Kimi-K2-Instruct} using the Terminus-1 and Terminus-2 agents. Every assistant bash action is treated as a pivot candidate, and an action in this domain is the next bash command produced at the model-call boundary. We start from the low-reward-mean set $\mathcal{D}_{\mathrm{adv}}$, with an additional command-deduplication step to improve diversity. The final dataset contains approximately 20{,}000 samples. The local verifier asks whether the sampled command is locally interchangeable with the demonstrated command: it combines output-schema validation, normalized string similarity, and equivalence-based LLM-as-judge scoring over the command and its immediate effect.

\paragraph{SWE-Bench Verified.}
We use an internal trajectory dataset generated with OpenHands~\citep{wang2025openhandsopenplatformai}, OpenCode~\citep{opencode}, and Codex~\citep{openaicodex_software} on tasks from SWE-Gym~\citep{pan2025trainingsoftwareengineeringagents} and R2E-Gym~\citep{jain2025r2egymproceduralenvironmentshybrid}, using \texttt{MiniMaxAI/MiniMax-M2.5}~\citep{minimax2026m25}. We evaluate mean@3 with the OpenHands harness. Every non-error tool-call action is treated as a pivot candidate, and an action in this domain is the next assistant tool call in the coding trace. Filtering to $\mathcal{D}_{\mathrm{adv}}$ yields a final training set of 87{,}718 samples. The local verifier matches tool-call names only. This is a deliberately coarse local signal: it checks whether the model selected the correct next \emph{kind} of operation at the pivot (e.g., search, open, edit, or run), without attempting to score tool arguments or patch quality at every turn. Final task success is determined only by the full SWE-Bench evaluation harness.

For the E2E RL comparison in Section~\ref{sec:swe_e2e}, PivotRL trains with a batch size of $1024$ ($64$ prompts $\times$ $16$ generations) at $1$ turn per sample, reaching $32.67\%$ accuracy at step $130$ with ${\sim}133$K cumulative rollout turns. The E2E RL baseline trains with a batch size of $512$ ($16$ prompts $\times$ $32$ generations) at $12$--$25$ turns per trajectory, reaching the same $32.67\%$ accuracy at step ${\sim}72$ with ${\sim}542$K cumulative rollout turns. At this matched accuracy, PivotRL requires ${\sim}4\times$ fewer rollout turns and ${\sim}5.5\times$ less wall-clock time (Figure~\ref{fig:swe_speedup}). Both methods use the same number of compute nodes.

\paragraph{BrowseComp.}
We start from a multi-hop question-answer dataset and use an online search engine together with \texttt{DeepSeek-V3.2} to generate browsing trajectories. Each search-related assistant step at a model-call boundary is treated as a pivot candidate; in this domain, an action is the next browsing step, such as issuing a search query, opening a result, or taking the next evidence-gathering action. The final dataset contains 13{,}215 samples.

\subsection{Detailed Ablation Analyses}
\label{app:detailed_ablations}

\subsubsection{Effect of Turn Selection Strategy on $\tau^2$-Bench}
\label{app:pivot_selection}

We compare two pivot selection strategies on $\tau^2$-Bench:
\begin{itemize}
    \item random pivots ($\mathcal{D}_{\mathrm{cand}}$): no filtering,
    \item low-reward-mean ($\mathcal{D}_{\mathrm{adv}}$): mixed-outcome pivots further filtered for a large gap between the demonstrated action's reward and the reference-policy mean, given by Eq.~\eqref{eq:dadv}.
\end{itemize}
Table~\ref{tab:tau_pivot_selection} shows a monotonic benefit from more selective filtering: random pivots already improve over SFT ($59.68$ vs.\ $58.44$), and low-reward-mean pivots yield the best result ($63.81$).

\begin{table*}[t]
\small
\centering
\setlength{\tabcolsep}{6pt}
\newcolumntype{F}{>{\centering\arraybackslash}p{0.8cm}}
\newcolumntype{C}{>{\centering\arraybackslash}p{0.8cm}}
\begin{tabular}{l|F|F|C|C}
\toprule
& \multicolumn{1}{c|}{\textbf{$\tau^2$-Airline}}
& \multicolumn{1}{c|}{\textbf{$\tau^2$-Retail}}
& \multicolumn{1}{c|}{\textbf{$\tau^2$-Telecom}}
& \multicolumn{1}{c}{\textbf{$\tau^2$-Average}} \\
\midrule
gpt-oss-120b-high
& 55.33 & 58.19 & 67.20 & 60.24 \\
gpt-oss-20b-high
& 43.33 & 50.20 & 52.63 & 48.72 \\
Qwen3-235B-A22B-Thinking-2507
& 50.00 & 66.08 & 46.78 & 54.29 \\
\midrule
Qwen3-30B-A3B-Thinking-2507
& 50.00 & 54.39 & 28.65 & 44.35 \\
\hdashline
Qwen + SFT
& 51.33 & 58.19 & 65.79 & 58.44 \\
Qwen + Random Pivots ($\mathcal{D}_{\mathrm{cand}}$)
& 58.00 & 63.74 & 57.31 & 59.68 \\
Qwen + Low-Reward-Mean Pivots ($\mathcal{D}_{\mathrm{adv}}$)
& 58.67 & 69.01 & 63.74 & 63.81 \\
\bottomrule
\end{tabular}
\caption{
Effect of pivot selection on $\tau^2$-Bench (best checkpoint over training).
Random turns improve over the base model and low-reward-mean pivots perform even stronger.
}
\label{tab:tau_pivot_selection}
\end{table*}

\section{Omitted Proofs}

\subsection{Proof of Theorem~\ref{thm:pivot}}

\begin{proof}
Fix $s$ and $\beta>0$. Write for simplicity
\begin{align*}
\pi = \pi_{s,\beta}, \quad
q = \mathbb E_{a\sim\pi}[r(s,a)], \quad
\sigma^2 = \mathrm{Var}_{a\sim\pi}(r(s,a)).
\end{align*}

We first compute the natural gradient of $J_s$ at $\pi$.
Let $v\in T_\pi$, and consider the exponential perturbation
\begin{align*}
\pi_t(a)= &~ \frac{\pi(a)e^{t v(a)}}{\sum_{b\in\mathcal{A}(s) }\pi(b)e^{t v(b)}}.
\end{align*}
For any $v\in T_\pi$, we have
\begin{align*}
\left.\frac{d}{dt}\right|_{t=0}\log \pi_t(a)
= v(a)-\sum_{b\in\mathcal{A}(s) }\pi(b)v(b)
= v(a).
\end{align*}
Thus
\begin{align*}
DJ_s(\pi)[v]
= &~ \left.\frac{d}{dt}\right|_{t=0}\mathbb E_{a\sim\pi_t}[r(s,a)] \\
= &~ \left.\frac{d}{dt}\right|_{t=0}\sum_{a\in\mathcal{A}(s) }\pi_t(a)r(s,a) \\
= &~ \sum_{a\in\mathcal{A}(s) }\pi(a)r(s,a)v(a) \\
= &~ \sum_{a\in\mathcal{A}(s) }\pi(a)\bigl(r(s,a)-q\bigr)v(a) \\
= &~ \langle r(s,\cdot)-q,\,v\rangle_{F,\pi}.
\end{align*}
Since this holds for every $v\in T_\pi$, the Riesz representative is
\begin{align*}
\nabla^{\mathrm{nat}}J_s(\pi)= &~ r(s,\cdot)-q.
\end{align*}
We then have
\begin{align}\label{eq:norm-natural-J}
\bigl\|\nabla^{\mathrm{nat}}J_s(\pi)\bigr\|_{F,\pi}^2
= &~ \mathbb E_{a\sim\pi}\bigl[(r(s,a)-q)^2\bigr] \notag\\
= &~ \mathrm{Var}_{a\sim\pi}(r(s,a)) \notag\\
= &~ \sigma^2
\end{align}

Next, we compute the $\beta$-derivative of the exponential-tilted path. Define
\begin{align*}
Z_s(\beta)= &~ \sum_{b\in\mathcal{A}(s) }\pi_0(b\mid s)e^{r(s,b)/\beta}.
\end{align*}
Then
\begin{align*}
\log \pi_{s,\beta}(a)
= &~ \log \pi_0(a\mid s) + \frac{r(s,a)}{\beta} - \log Z_s(\beta).
\end{align*}
Differentiating with respect to $\beta$ gives
\begin{align*}
\partial_\beta \log \pi_{s,\beta}(a)
= &~ -\frac{r(s,a)}{\beta^2} - \frac{Z_s'(\beta)}{Z_s(\beta)}.
\end{align*}
Now
\begin{align*}
Z_s'(\beta)
= &~ \sum_{b\in\mathcal{A}(s) }\pi_0(b\mid s)e^{r(s,b)/\beta}\left(-\frac{r(s,b)}{\beta^2}\right) \\
= &~ -\frac{1}{\beta^2}\sum_{b\in\mathcal{A}(s) }\pi_0(b\mid s)e^{r(s,b)/\beta}r(s,b) \\
= &~ -\frac{Z_s(\beta)}{\beta^2}\,\mathbb E_{b\sim\pi_{s,\beta}}[r(s,b)] \\
= &~ -\frac{q}{\beta^2}Z_s(\beta).
\end{align*}
Therefore
\begin{align*}
\partial_\beta \log \pi_{s,\beta}(a)
= &~ -\frac{1}{\beta^2}\bigl(r(s,a)-q\bigr) \\
= &~ -\frac{1}{\beta^2}\nabla^{\mathrm{nat}}J_s(\pi_{s,\beta})(a).
\end{align*}

If $\sigma=0$, then $r(s,a)=q$ for $\pi$-almost every $a$, so
\begin{align*}
\nabla^{\mathrm{nat}}J_s(\pi)= &~ 0, ~~\text{and}~~ \frac{1}{\beta^2}\bigl\|\nabla^{\mathrm{nat}}J_s(\pi)\bigr\|_{F,\pi}= 0.
\end{align*}

Now assume $\sigma>0$. Then
\begin{align*}
\gamma_{s,\beta}
= &~ -\mathbb E_{a\sim\pi}
\left[
\frac{r(s,a)-q}{\sigma}\,\partial_\beta \log \pi(a)
\right] \\
= &~ -\mathbb E_{a\sim\pi}
\left[
\frac{r(s,a)-q}{\sigma}\left(-\frac{1}{\beta^2}(r(s,a)-q)\right)
\right] \\
= &~ \frac{1}{\beta^2\sigma}\mathbb E_{a\sim\pi}\bigl[(r(s,a)-q)^2\bigr] \\
= &~ \frac{\sigma}{\beta^2}.
\end{align*}
Combining this with Eq.~\eqref{eq:norm-natural-J}, we have
\begin{align*}
\gamma_{s,\beta}
= &~ \frac{1}{\beta^2}\bigl\|\nabla^{\mathrm{nat}}J_s(\pi_{s,\beta})\bigr\|_{F,\pi_{s,\beta}} = \frac{\sigma}{\beta^2}.
\end{align*}
This proves the theorem.
\end{proof}

\subsection{Proof of Theorem~\ref{thm:kl}}\label{sec:proof-kl-proj}

\begin{proof}
Since
\begin{align*}
\mathcal L_{\mathrm{func},\beta}(\pi)= &~ \mathbb E_{s\sim d}[\ell_s(\pi)]
\end{align*}
has no coupling across states, it suffices to solve the pointwise minimization problem for each fixed state $s$. The claim for the population objective then follows immediately $d$-almost surely.

Fix a state $s$, and write $M= \mathcal M(s), ~\rho= \pi_0(M\mid s), ~\pi_0(a)= \pi_0(a\mid s)$.

We first handle the boundary cases. 
If $\rho=0$, then $M=\varnothing$ because $\pi_0(a)>0$ for every $a\in \mathcal{A}(s) $. Hence $r_{\mathrm{func}}(s,a)\equiv 0$, so $\ell_s(\pi)= \beta\, \mathrm{KL}(\pi\|\pi_0)$,
which is uniquely minimized at $\pi=\pi_0$. 
If $\rho=1$, then $M=\mathcal{A}(s) $, so $r_{\mathrm{func}}(s,a)\equiv 1$, and $\ell_s(\pi)= -1 + \beta\, \mathrm{KL}(\pi\|\pi_0)$,
which is again uniquely minimized at $\pi=\pi_0$.

It remains to consider the nondegenerate case $0<\rho<1$.
Let $p$ be any distribution on $\mathcal{A}(s) $, and let
\begin{align*}
q= &~ p(M)=\sum_{a\in M} p(a).
\end{align*}
For $q\in(0,1)$, define the conditional distributions
\begin{align*}
p_+(a)= \frac{p(a)}{q}, \quad a\in M, \qquad
p_-(a)= \frac{p(a)}{1-q}, \quad a\notin M,
\end{align*}
and similarly
\begin{align*}
\pi_{0,+}(a)= \frac{\pi_0(a)}{\rho}, \quad a\in M, \qquad
\pi_{0,-}(a)= \frac{\pi_0(a)}{1-\rho}, \quad a\notin M.
\end{align*}

Using the chain rule for KL divergence, we obtain
\begin{align*}
\mathrm{KL}(p\|\pi_0)
= &~ \sum_{a\in M} p(a)\log\frac{p(a)}{\pi_0(a)}
   + \sum_{a\notin M} p(a)\log\frac{p(a)}{\pi_0(a)} \\
= &~ q\,\mathrm{KL}(p_+\|\pi_{0,+})
   + (1-q)\,\mathrm{KL}(p_-\|\pi_{0,-}) + q\log\frac{q}{\rho}
   + (1-q)\log\frac{1-q}{1-\rho}.
\end{align*}

Since $\mathbb E_{a\sim p}[r_{\mathrm{func}}(s,a)]=q$, the pointwise objective becomes
\begin{align*}
\ell_s(p)
= -q
+ \beta\Bigg[
q\,\mathrm{KL}(p_+\|\pi_{0,+})
+ (1-q)\,\mathrm{KL}(p_-\|\pi_{0,-})
+ q\log\frac{q}{\rho}
+ (1-q)\log\frac{1-q}{1-\rho}
\Bigg].
\end{align*}

For fixed $q$, the last two terms are constants, while the first two KL terms are nonnegative and vanish if and only if
\begin{align*}
p_+= \pi_{0,+}, \quad
p_-= \pi_{0,-}.
\end{align*}
Hence, for fixed $q$, the unique minimizer is
\begin{align*}
\bar p_q(a)= &~
\begin{cases}
\displaystyle \frac{q}{\rho}\,\pi_0(a), & a\in M,\\[1.1ex]
\displaystyle \frac{1-q}{1-\rho}\,\pi_0(a), & a\notin M.
\end{cases}
\end{align*}

Substituting $p=\bar p_q$ eliminates the conditional KL terms and yields the scalar problem
\begin{align*}
\phi_s(q)= &~ -q
+ \beta\left[
q\log\frac{q}{\rho}
+ (1-q)\log\frac{1-q}{1-\rho}
\right],
\qquad q\in[0,1],
\end{align*}
with the usual convention $0\log 0 = 0$.

For $q\in(0,1)$,
\begin{align*}
\phi_s'(q)= &~ -1 + \beta\log\frac{q(1-\rho)}{(1-q)\rho}, \\
\phi_s''(q)= &~ \beta\left(\frac{1}{q}+\frac{1}{1-q}\right) > 0.
\end{align*}
Therefore $\phi_s$ is strictly convex on $(0,1)$ and has at most one stationary point there. Solving $\phi_s'(q)=0$ gives $q= \frac{\rho e^{1/\beta}}{(1-\rho)+\rho e^{1/\beta}}$.
Hence the unique minimizer of $\ell_s$ is $\pi_\beta^\star(\cdot\mid s)=\bar p_{q_\beta(s)}$, namely
\begin{align}\label{eq:pi-beta-star}
\pi_\beta^\star(a\mid s)= &~
\begin{cases}
\displaystyle \frac{q_\beta(s)}{\rho(s)}\,\pi_0(a\mid s), & a\in \mathcal M(s),\\[1.1ex]
\displaystyle \frac{1-q_\beta(s)}{1-\rho(s)}\,\pi_0(a\mid s), & a\notin \mathcal M(s).
\end{cases}
\end{align}
Among all distributions with total acceptable mass $q_\beta(s)$, the unique KL minimizer is precisely the block-rescaled distribution above.

From Eq.~\eqref{eq:pi-beta-star}, the ordering-preservation statements are immediate. If $a,b\in \mathcal M(s)$, then
\begin{align*}
\frac{\pi_\beta^\star(a\mid s)}{\pi_\beta^\star(b\mid s)}
= &~ \frac{\pi_0(a\mid s)}{\pi_0(b\mid s)}.
\end{align*}
Likewise, if $a,b\notin \mathcal M(s)$, then
\begin{align*}
\frac{\pi_\beta^\star(a\mid s)}{\pi_\beta^\star(b\mid s)}
= &~ \frac{\pi_0(a\mid s)}{\pi_0(b\mid s)}.
\end{align*}
This completes the proof.
\end{proof}

\end{document}